\documentclass[a4paper,12pt]{article}
\usepackage{amsmath, amsthm, amssymb} 
\usepackage{centernot}
\usepackage[right]{eurosym}
\usepackage{tikz}
\usepackage{url}
\usepackage[english]{babel}
\usepackage[utf8]{luainputenc}
\usepackage{tikz}
\usepackage{centernot}
\usepackage[shortlabels]{enumitem}
\usepackage{ifthen}
\usepackage{verbatim}   
\usepackage{multicol}

\numberwithin{equation}{section}

\newlist{choices}{enumerate}{1}
\setlist[choices]{label*=(\alph*)}

\SetEnumitemKey{twocol}{
  before=\raggedcolumns\begin{multicols}{2},
  after=\end{multicols}}

\renewcommand{\epsilon}{\varepsilon}
\newcommand{\Var}{\text{Var}}

\newcommand{\MSE}{\text{MSE}}

\newcommand{\blue}{\textcolor{blue}}

\newcommand{\cA}{\mathcal{A}}

\newcommand{\cF}{\mathcal{F}}

\newcommand{\cH}{\mathcal{H}}
\newcommand{\cI}{\mathcal{I}}

\newcommand{\cM}{\mathcal{M}}
\newcommand{\cO}{\mathcal{O}}
\newcommand{\cR}{\mathcal{R}}

\newcommand{\N}{\mathbb{N}}

\newtheorem{theo}{Theorem}[section]
\newtheorem{lemma}[theo]{Lemma}

\newtheorem{corollary}[theo]{Corollary}

\theoremstyle{definition}
\newtheorem{defi}[theo]{Definition}

\newtheorem*{bem*}{Comment}

\newtheorem{proposition}[theo]{Proposition} 
\newtheorem*{algorithm}{Algorithm}

\newcommand{\hidden}[1]{}

        {%
            \ifthenelse{\isundefined{\showtodos}}%
                    {\expandafter\comment}%
                    {}%
                    }%
         {%
            \ifthenelse{\isundefined{\showtodos}}%
                    {\expandafter\endcomment}%
                    {}%
          }

\title{Learning Optimal Search Strategies}

\author{Stefan Ankirchner \thanks{Stefan Ankirchner, Institute for Mathematics, University of Jena, Inselplatz 5, 07743 Jena, Germany. \emph{Email:} s.ankirchner@uni-jena.de.} 
\and Maximilian Philipp Thiel \thanks{Maximilian Thiel, Institute for Mathematics, University of Jena, Inselplatz 5, 07743 Jena, Germany. \emph{Email:} maximilian.thiel@uni-jena.de.}  }

\begin{document}

\maketitle

\begin{abstract}
We explore the question of how to learn an optimal search 
strategy within the example of a parking problem where parking opportunities arrive according to an unknown inhomogeneous Poisson process.
The optimal policy is a threshold-type stopping rule characterized by an indifference position. We propose an algorithm that learns this threshold by estimating the integrated jump intensity rather than the intensity function itself. We show that our algorithm achieves a logarithmic regret growth, uniformly over a broad class of environments. Moreover, we prove a logarithmic minimax regret lower bound, establishing the growth optimality of the proposed approach.  
\end{abstract}
\begin{center}\footnotesize
  \begin{tabular}{r@{ : }p{10cm}}
      {\itshape 2020 MSC} & Primary: 60G40, 93E35; secondary: 62L05, 68Q32.\\
      {\itshape Keywords} &  optimal stopping, parking problem, reinforcement learning, regret. 
  \end{tabular}
\end{center}

\section*{Introduction}

The parking problem is a classical example of a search problem. 
To describe a standard version of it, suppose that an agent is driving along a street, that she can not make a U-turn and that she can only see whether the next lot is free, but not which of the following ones. If the agent arrives at a free lot, then she has to decide whether to take it or not. Once a free lot is discarded, it is discarded forever. A decision rule can be modeled as a stopping time. The parking problem consists of finding the stopping time that minimizes the expected distance of the taken lot to some given target. 

To determine an optimal stopping time, one needs to know the distribution of the position of free parking lots. But what if the agent does not know the distribution? 

The parking problem is also a paradigm of a problem that usually needs to be solved not once, but in many consecutive rounds, e.g.\ every morning when driving to your workplace. If the stopping agent does not know the distribution of the free lots, then she can learn it over time by observing in each round the positions of free lots up to the actual chosen one. 

In the present article we address the question of what constitutes a good strategy for choosing a parking lot round after round. We do so within the framework of a continuous-time model, where free parking lots arrive according to an inhomogeneous Poisson process. 
Within this model the optimal stopping rule is of threshold type: there is a position $b^*$ after which it is optimal to take the first free lot. The position $b^*$ can be characterized as an indifference level: if the lot at $b^*$ is free, then the agent is indifferent between taking it or taking the next free one. 

We assume that the stopping agent does not know the jump intensity of the Poisson process. We propose a specific algorithm, called indifference level updating (ILU), that  estimates the integrated jump intensity and determines a stopping rule in every round, based on the observations made so far. 

In order to assess the quality of our algorithm we compute the growth rate of the regret, i.e.\ the accumulated difference of the expected distance to the target and the minimal expected distance when choosing the optimal stopping rule. Indeed, we show that the regret implied by the ILU algorithm grows logarithmically, uniformly for a large class of distributions. 

A crucial property of the ILU algorithm is that it does not estimate the jump intensity function, but the \emph{integrated} jump intensity. For the latter we have estimators with a mean square error (MSE) converging to zero at the rate $1/n$, where $n$ is the number of independent process observations. We assume that the jump intensity is once continuously differentiable, which implies that the roundwise regret can be bounded against the MSE of the estimator. Hence a MSE in $\cO(1/n)$ entails a logarithmic growth of the accumulated regret. We remark that any estimator of the intensity function itself, e.g.\ a kernel estimator, converges at a slower rate than $\cO(1/n)$, and hence entails a regret growing faster than the logarithm.

To show that the ILU algorithm is good, we prove that the minimax regret grows logarithmically as well. This means that there is no algorithm that has, uniformly over all environments, a regret rate that grows slower than the logarithm. In this sense the ILU algorithm is a good method for choosing parking lots.

While we present our results in the context of the parking problem, the underlying methods are not specific to this application. They apply more broadly to a class of timing and search problems with stochastic opportunity arrivals. To simplify the exposition, we develop our approach using the parking problem as a canonical example; it should be possible to carry out extensions to other settings with minor and natural adaptations.

\subsubsection*{Comparison with the literature} 

There is an extensive literature on the parking problem formulated as a stopping problem, see e.g.\ \cite{MacQueen} for an early reference in discrete time. A version of the parking problem in continuous time has been considered in \cite{sakaguchi1982optimal}. We refer to Chapter 2.5 in \cite{ferguson2018optimal} for a recent overview on the variants of the parking problem that have been solved already. 

The ILU algorithm described in the present article can be seen as an example of a model-based reinforcement learning algorithm. Reinforcement learning (RL) usually is understood to comprise algorithms, e.g.\ the $q$-learning algorithm, that make few assumptions on the model and the system distribution. RL algorithms are, therefore, universally applicable. However, the algorithms can be quite inefficient for some applications.

To obtain efficient algorithms for learning solutions to stochastic control problems it is natural to use as much information on the model as possible. The ILU algorithm presented in the this article is such an algorithm: it makes use of the fact that optimal stopping rules are of threshold type, i.e.\ there exists a position after which it is optimal to take the first free lot. Moreover, the algorithm exploits the fact that free lots arrive according to an inhomogeneous Poisson process. A crucial ingredient of the algorithm is the estimation of the integrated jump intensity of the Poisson process.  

\vspace{.2cm}
There is quite some literature on model-based RL algorithms for models where the state dynamics are described as a Markov decision process. We refer to \cite{m2023model} for a survey. 
Examples of model-based RL algorithms for continuous-time stochastic control problems are still rather scarce. A first approach for linear-quadratic problems in continuous time is presented in \cite{duncan2002adaptive}. The authors introduce a weighted least-squares algorithm for learning drift rates and hence the optimal control. The growth rate of the regret is not provided, and it seems that its determination has not yet been made. \cite{basei2022logarithmic,guo2021reinforcement,szpruch2021exploration} consider linear-quadratic problems in an episodic finite time horizon setting, providing non-asymptotic regret bounds. 

The article \cite{neuman2023statistical} considers propagator models and describe algorithms achieving sublinear regrets. \cite{christensen2024data, christensen2023data, MR4746600, christensen21} examine singular and impulsive control problems in a non-parametric, ergodic setting, and obtain algorithms with sublinear regret rates of power type. The article \cite{ACKLBP} introduces a steering algorithm to keep a process as close to some given path as possible, exploiting that the process is driven by a Browian motion and that the drift rates are from an unknown bounded interval.

\section{The parking problem in continuous time}
\label{optimal_criteria_chapter}

In this section we summarize some results on the parking problem in continuous time. 

Suppose that the parking spaces are located on the interval \([S,\infty)\), where \( S\in (-\infty,0) \). 
Assume that free parking lots arrive according to an inhomogeneous Poisson process with jump intensity \(\lambda: [S, \infty) \to (0,\infty)\). More precisely, let \((N_t^\lambda)_{t\ge S}\) a right-continuous process, defined on some probability space $(\Omega, \cF, P)$, such that the increments are independent and \((N_t^\lambda-N_s^\lambda)\) is Poisson distributed with the parameter \(\int_s^t \lambda(u) du\), for all \(s\le t\). We interpret the jump times of the process $(N_t)$ as the positions of the free parking lots the agent can use while driving along the street from $S$ to the right.   

In this section we assume that the jump intensity function $\lambda$ is known by the stopping agent. In the next section we will omit this assumption and assume that the stopping agent does not know $\lambda$, but can learn it by observing independent samples of the jump process $(N_t)$. 

We suppose that the agent wants to park her car as close as possible to the target $0$, in expectation. A policy for searching a parking lot can be described in terms of decision rule describing for each parking lot $t \in [S, \infty)$ whether to take it or not, in case it is free. 

It is straightforward to show that it is enough to consider only decision rules of the following threshold type: there is a threshold $b \in [S, 0]$ after which the first free lot is accepted; and before $b$ any free lot is discarded. Indeed, for any decision rule one can construct a threshold rule with an expected distance to the target that is not larger. 

We model the decision rule with threshold $b \in [S, 0]$ as the stopping time \(\tau_b := \inf\{t\ge b: N_t > N_b\}\). Observe that $\tau_b$ is the first jump time of $N$ after $b$; we interpret $\tau_b$ as the position of the first free parking space after $b$. 

Note that the aim to park the car as close as possible to the target amounts in the problem of finding the threshold $b$ for which $E|\tau_b|$ becomes minimal. We refer to $b^*$ such that $E|\tau_{b^*}| = \min_{b \in [S, 0]} E[|\tau_b|]$ as an optimal threshold. 


The next theorem provides a sufficient condition for $b^*$ to be an optimal threshold. 
\begin{theo}
\label{criteria}
Let \(b^* \in [S,0)\) be such that 
\begin{align}
\label{criteria_optimal}
\int_{b^*}^0 e^{\int_y^0 \lambda(u)du} \, dy =\int_0^\infty e^{-\int_0^y \lambda(u)du} \, dy.
\end{align}
Then \( b^*\) is an optimal threshold.
\end{theo}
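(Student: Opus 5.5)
Our plan is to compute $f(b):=E|\tau_b|$ explicitly for $b\in[S,0]$, differentiate it, and show that its sign pattern makes $b^*$ a global minimiser. Write $\Lambda(s,t)=\int_s^t\lambda(u)\,du$. For $b\le 0$ we have $P(\tau_b>t)=e^{-\Lambda(b,t)}$ for $t\ge b$. We split $|\tau_b|=-\tau_b\mathbf 1_{\{\tau_b<0\}}+\tau_b\mathbf 1_{\{\tau_b\ge 0\}}$ and use the layer-cake formula on each part. This gives
\begin{align*}
f(b)=\int_b^0 \bigl(1-e^{-\Lambda(b,y)}\bigr)\,dy+\int_0^\infty e^{-\Lambda(b,y)}\,dy
= -b - e^{-\Lambda(b,0)}\Bigl(\int_b^0 e^{\Lambda(y,0)}\,dy-\int_0^\infty e^{-\Lambda(0,y)}\,dy\Bigr).
\end{align*}
To check the first term, note that $E[(-\tau_b)^+]=\int_0^{-b}P(\tau_b<-s)\,ds$. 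For the rearrangement we use $e^{-\Lambda(b,y)}=e^{-\Lambda(b,0)}e^{\Lambda(y,0)}$ for $y\le 0$ and $e^{-\Lambda(b,y)}=e^{-\Lambda(b,0)}e^{-\Lambda(0,y)}$ for $y\ge0$. We implicitly assume $\int_0^\infty e^{-\Lambda(0,y)}dy<\infty$; this holds because the hypothesis equates it with a finite integral.

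Next we set $G(b):=\int_b^0 e^{\Lambda(y,0)}dy-C$, where $C:=\int_0^\infty e^{-\Lambda(0,y)}dy$. Then $f(b)=-b-e^{-\Lambda(b,0)}G(b)$. Since $\lambda$ is continuous (it has to be at least locally integrable, and then a.e. differentiation suffices), we can differentiate: $\frac{d}{db}e^{-\Lambda(b,0)}=\lambda(b)e^{-\Lambda(b,0)}$ and $G'(b)=-e^{\Lambda(b,0)}$. Hence
\begin{align*}
f'(b)=-1-\lambda(b)e^{-\Lambda(b,0)}G(b)+1=-\lambda(b)e^{-\Lambda(b,0)}G(b).
\end{align*}

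The sign analysis is now immediate. $G$ is strictly decreasing, and the hypothesis \eqref{criteria_optimal} says exactly that $G(b^*)=0$. So $G>0$ on $[S,b^*)$ and $G<0$ on $(b^*,0]$. Since $\lambda>0$, $f'<0$ on $(S,b^*)$ and $f'>0$ on $(b^*,0)$. Because $f$ is absolutely continuous on $[S,0]$, it is decreasing and then increasing, so $b^*$ minimises $f$ over $[S,0]$.

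The main obstacle is not the sign argument but making the explicit formula and its differentiation rigorous under minimal assumptions. This requires finiteness of $C$, which follows from the hypothesis, and enough regularity of $\lambda$ for the fundamental theorem of calculus. If only local integrability were available, we would instead compare $f(b)-f(b^*)$ directly as an integral of $f'$, which still works in the absolutely continuous setting.
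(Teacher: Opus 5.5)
Your proof is correct. It reaches the same key identity as the paper by a somewhat different route, and it also supplies a step the paper omits.

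\textbf{How the computation differs.} The paper writes $E|\tau_b|$ through the density $\lambda(y)e^{-\int_b^y\lambda(u)du}$. It differentiates under the integral sign to get $f'(b)=\lambda(b)\bigl(b+f(b)\bigr)$, and only then integrates by parts to turn the bracket into $-e^{-\Lambda(b,0)}G(b)$ in your notation. You get the closed form $f(b)=-b-e^{-\Lambda(b,0)}G(b)$ directly from the tail formula. This avoids both the integration by parts and the differentiation under an integral over $[0,\infty)$, which the paper does not justify.

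\textbf{What you add.} The paper simply asserts that the first-order condition is sufficient for optimality, which is false for a general function. Your observation that $G$ is strictly decreasing with its unique zero at $b^*$ gives $f'<0$ before $b^*$ and $f'>0$ after it. That is exactly the missing step turning the first-order condition into global minimality on $[S,0]$, including the boundary case $b^*=S$.

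\textbf{Regularity.} Your absolute-continuity remark also covers intensities that are merely locally integrable, which is all Section 1 formally assumes. The paper's pointwise evaluation of $\lambda(b)$ tacitly needs continuity. For a cleaner write-up, drop the parenthetical claim that $\lambda$ is continuous. Just note that $b\mapsto e^{-\Lambda(b,0)}$ is absolutely continuous and $G$ is $C^1$, so $f$ is absolutely continuous with $f'=-\lambda e^{-\Lambda(\cdot,0)}G$ a.e. Then $f(b)-f(b^*)=\int_{b^*}^b f'(u)\,du\ge 0$ for every $b\in[S,0]$.
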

\begin{proof}
First note that for all \(b \in [S,0]\) we have  
\[E|\tau_b|=-\int_b^0 y \lambda(y) e^{-\int_b^y \lambda(u)du} \, dy + \int_0^\infty y \lambda(y) e^{-\int_b^y \lambda(u)du} \, dy.\]
Next observe that for $b$ to be optimal it is sufficient that it satisfies the FOC $\frac{\partial E|\tau_b|}{\partial b} = 0$. We now show that \eqref{criteria_optimal} is equivalent to the FOC. Indeed, 
\[\frac{\partial E|\tau_b|}{\partial b}=\lambda(b)\left[b-\int_b^0 y \lambda(y) e^{-\int_b^y \lambda(u)du} \, dy + \int_0^\infty y \lambda(y) e^{-\int_b^y \lambda(u)du} \, dy \right],\]
and hence $\frac{\partial E|\tau_b|}{\partial b} = 0$ is equivalent to 
\[b=\int_b^0 y \lambda(y) e^{-\int_b^y \lambda(u)du} \, dy - \int_0^\infty y \lambda(y) e^{-\int_b^y \lambda(u)du} \, dy.\]
By using \(\lambda(y) e^{-\int_b^y \lambda(u)du} = -\frac{\partial}{\partial y} e^{-\int_b^y \lambda(u) du}\) and applying integration by parts, the last equation can be rewritten as
\begin{align*}
b&=-\int_b^0 y  \frac{\partial}{\partial y} e^{-\int_b^y \lambda(u)du} \, dy + \int_0^\infty y \frac{\partial}{\partial y} e^{-\int_b^y \lambda(u)du} \, dy\\
&= -\left[y e^{-\int_b^y \lambda(u)du} \right]_b^0 + \int_b^0 e^{-\int_b^y \lambda(u)du} \, dy + \left[y e^{-\int_b^y \lambda(u)du} \right]_0^\infty - \int_0^\infty e^{-\int_b^y \lambda(u)du} \, dy\\
&=-\left(0-b\right)+ \int_b^0 e^{-(\int_b^0 \lambda(u)du-\int_y^0 \lambda(u)du)} \, dy + 0 - \int_0^\infty e^{-(\int_b^0 \lambda(u)du+\int_0^y \lambda(u)du)}\, dy\\
&=b + e^{-\int_b^0 \lambda(u)du} \left(\int_{b}^0 e^{\int_y^0 \lambda(u)du} \, dy -\int_0^\infty e^{-\int_0^y \lambda(u)du} \, dy\right).
\end{align*}
This implies that the FOC is equivalent to 
\[\int_{b}^0 e^{\int_y^0 \lambda(u)du} \, dy =\int_0^\infty e^{-\int_0^y \lambda(u)du} \, dy.\]
\end{proof}
In the following we rewrite the LHS of \eqref{criteria_optimal} in terms of the integrated jump intensity 
\[\Lambda(y) := \int_y^0 \lambda(u) \, du, \,\, y \in [S,0].\]
Hence, if $b^*$ satisfies the equation 
\begin{align}
    \int_{b^*}^0 e^{\Lambda(y)} \, dy =\int_0^\infty e^{-\int_0^y \lambda(u)du}  \, dy,
\end{align}
then $b^*$ is an optimal threshold. 

We finally remark that since $\lambda$ is assumed to be positive, there exists at most one real $b^* \in [S,0]$ satisfying equation \eqref{criteria_optimal}.

\subsubsection*{Intuition behind the optimality criterion}
Note that for all \(t>0\)
\[P(\tau_0>t)=P(N_t^\lambda-N_0^\lambda =0)=e^{-\int_0^t \lambda(u)du},\] and hence the RHS of \eqref{criteria_optimal} coincides with the expectation \(E[\tau_{b^*} | \tau_{b^*} > 0] = E(\tau_0)=\int_0^\infty P(\tau_0>t) \, dt.\) In other words, the right-hand side of \eqref{criteria_optimal} corresponds exactly to the expected value of the first jump time after the parking space 0. The right-hand side, therefore, measures the expected costs after the driver has passed parking space 0. 

The left-hand side of Equation \eqref{criteria_optimal} is equal to 
$\frac{|b^*| - E[|\tau_{b^*}| 1_{\{\tau_{b^*} < 0\}} ]}{P(\tau_{b^*} \ge 0)}$. Rearranging terms yields that \eqref{criteria_optimal} is equivalent to
\begin{align}\label{indifference}
    |b^*| = E|\tau_{b^*}].  
\end{align}
Equation \eqref{indifference} characterizes $b^*$ as the position at which the agent is indifferent between taking the lot, provided it is free, and continuing until the next free lot.

\subsubsection*{Optimality gap}

We denote by 
\begin{align}\label{defi og}
\Delta(b):= E|\tau_b| - E|\tau_{b^*}|    
\end{align}
the optimality gap of choosing the stopping rule with threshold $b \in [S, 0]$ instead of the optimal threshold $b^*$. The next lemma collects some properties of the optimality gap. 
\begin{lemma}\label{suff condi Delta smooth}
$\Delta$ is differentiable. Moreover, if $\lambda$ is continuously differentiable, then $\Delta$ is twice continuously differentiable, and the second derivative is given by 
\begin{align}\label{sec deri Delta}
& \Delta''(b)  \\
= & \lambda'(b)\left[b-\int_b^0 y \lambda(y) e^{-\int_b^y \lambda(u)du} \, dy + \int_0^\infty y \lambda(y) e^{-\int_b^y \lambda(u)du} \, dy \right] \nonumber\\
& + \lambda(b)\left[1 + b \lambda(b) - \int_b^0 y \lambda(y) \lambda(b) e^{-\int_b^y \lambda(u)du} \, dy + \int_0^\infty y \lambda(y) \lambda(b) e^{-\int_b^y \lambda(u)du} \, dy \right]. \nonumber 
\end{align}
\end{lemma}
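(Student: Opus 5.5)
Since $E|\tau_{b^*}|$ does not depend on $b$, we have $\Delta(b) = E|\tau_b| - \text{const}$. It therefore suffices to study $f(b) := E|\tau_b|$, using the representation from the proof of Theorem \ref{criteria}:
\[
f(b) = -\int_b^0 y \lambda(y) e^{-\int_b^y \lambda(u)du}\,dy + \int_0^\infty y\lambda(y) e^{-\int_b^y \lambda(u)du}\,dy .
\]
The plan is to factor out the $b$-dependence. Write $e^{-\int_b^y\lambda(u)du} = e^{\int_y^0\lambda(u)du}\, e^{-\Lambda(b)}$ for $y\le 0$, and $e^{-\int_b^y\lambda(u)du} = e^{-\Lambda(b)}e^{-\int_0^y\lambda(u)du}$ for $y\ge 0$. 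This gives
\[
f(b) = e^{-\Lambda(b)}\Bigl( -\int_b^0 y\lambda(y)e^{\int_y^0\lambda(u)du}\,dy + C\Bigr), \qquad C:=\int_0^\infty y\lambda(y)e^{-\int_0^y\lambda(u)du}\,dy .
\]
Here $C$ is a finite constant: it equals $E[\tau_0]$, which is finite as in the intuition section. This is implicitly assumed, since otherwise the problem is trivial.

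For differentiability, $\Lambda(b)=\int_b^0\lambda$ is differentiable with $\Lambda'(b)=-\lambda(b)$ whenever $\lambda$ is continuous, and the same holds for the inner integral $\int_b^0 y\lambda(y)e^{\int_y^0\lambda}dy$. (I assume $\lambda$ is continuous, or at least locally integrable with Lebesgue points; the paper tacitly uses continuity already in Theorem \ref{criteria}.) The product rule then gives exactly the formula for $f'(b)$ stated in the proof of Theorem \ref{criteria}:
\[
f'(b)=\lambda(b)\,g(b),\qquad g(b):=b-\int_b^0 y \lambda(y) e^{-\int_b^y \lambda(u)du}\,dy + \int_0^\infty y \lambda(y) e^{-\int_b^y \lambda(u)du}\,dy = b+f(b).
\]
This proves the first claim.

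For the second claim, the key observation is $g(b)=b+f(b)$. Since $f$ is $C^1$, so is $g$, with $g'(b)=1+f'(b)=1+\lambda(b)g(b)$. If $\lambda\in C^1$, the product $\lambda g$ is $C^1$, so $\Delta''=f''=\lambda' g+\lambda g' = \lambda'(b)g(b)+\lambda(b)\bigl(1+\lambda(b)g(b)\bigr)$. Expanding $\lambda(b)g(b)$ inside the second bracket gives precisely \eqref{sec deri Delta}. Continuity of $\Delta''$ follows because $\lambda,\lambda',g$ are all continuous.

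The only delicate point is justifying the differentiation under the integral and the finiteness of the $\int_0^\infty$ term. The factorization above avoids differentiating under the integral sign altogether, since $b$ appears only in $e^{-\Lambda(b)}$ and in the lower limit of a Lebesgue integral with continuous integrand. So the main obstacle reduces to the fundamental theorem of calculus plus the finiteness of $C$.
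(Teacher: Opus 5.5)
Your proof is correct. The paper gives no argument beyond ``straightforward''. The shape of \eqref{sec deri Delta} (second bracket $1+b\lambda(b)$ plus the two integrals carrying an extra factor $\lambda(b)$) is what one gets by writing $\Delta'=\lambda g$ and differentiating $g$ term by term with the Leibniz rule, including under the improper integral over $[0,\infty)$.

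You instead pull out $e^{-\Lambda(b)}$, so that $b$ enters only through $\Lambda(b)$ and a finite integration limit, and then use $g=b+f$ to get $g'=1+\lambda g$. This needs only the fundamental theorem of calculus and the product rule, and it gives the compact form $\Delta''=\lambda' g+\lambda(1+\lambda g)$. Combined with $g(b)=e^{-\Lambda(b)}\bigl(E(\tau_0)-\int_b^0 e^{\Lambda(y)}\,dy\bigr)$, which follows from the computation in the proof of Theorem~\ref{criteria}, this form also makes immediate the uniform bound on $\Delta''_\lambda$ over $\cM(L)$ that Section~\ref{sec proof main res 1} asserts without detail.

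Two of your side remarks need adjusting. First, finiteness of $E(\tau_0)$ is not established in the intuition section. On $\cM(L)$ it follows from Lemma~\ref{Var_schranke}; for a general positive $\lambda$ it can fail, in which case $\Delta$ is undefined. Second, the fallback ``locally integrable with Lebesgue points'' only yields differentiability almost everywhere. Since $g$ has at most one zero (at $b^*$), a jump of $\lambda$ at any other point $b_0$ produces distinct one-sided derivatives $\lambda(b_0\pm)g(b_0)$. So continuity of $\lambda$ is genuinely needed for the first claim, and your main assumption of continuity is the right one.
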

\begin{proof}
The claims are straightforward to show. 
\end{proof}
We close this section by considering the special case where $\lambda$ is constant. 
\begin{corollary}
\label{constant_lambda_cost}
Suppose that $\lambda$ is constant and greater than $\ln(2)/|S|$. Then $b^* = - \ln(2)/\lambda$ is an optimal threshold and 
\begin{align}
    \Delta_\lambda(b) = \frac{1}{\lambda}(2 e^{\lambda b} -1) -b-\frac{\ln(2)}{\lambda}.
\end{align}
\end{corollary}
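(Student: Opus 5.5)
The plan is to apply Theorem~\ref{criteria} with $\lambda$ constant, and then compute $E|\tau_b|$ in closed form for every $b\in[S,0]$ so that the formula for $\Delta_\lambda$ can be read off.

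\emph{Optimal threshold.} For constant $\lambda$ the right-hand side of \eqref{criteria_optimal} is $\int_0^\infty e^{-\lambda y}\,dy = 1/\lambda$. The left-hand side is
\[
\int_{b}^0 e^{-\lambda y}\,dy = \frac{e^{-\lambda b}-1}{\lambda}.
\]
Equating the two sides gives $e^{-\lambda b}=2$, so $b^*=-\ln(2)/\lambda$. The assumption $\lambda>\ln(2)/|S|$ is exactly what guarantees $b^*\in(S,0)\subset[S,0)$. Theorem~\ref{criteria} then says that $b^*$ is an optimal threshold.

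\emph{Expected distance.} Next I compute $E|\tau_b|$. By the memoryless property, $\tau_b-b$ is exponential with rate $\lambda$, so I can either use the density $\lambda e^{-\lambda(y-b)}$ on $[b,\infty)$ directly or use the formula at the start of the proof of Theorem~\ref{criteria}. The cleanest route is the identity $|x|=x+2x^-$:
\[
E|\tau_b| = E\tau_b + 2E[(\tau_b)^-] = b+\frac1\lambda + 2\int_b^0 (-y)\lambda e^{-\lambda(y-b)}\,dy .
\]
One integration by parts gives
\[
\int_b^0 (-y)\lambda e^{-\lambda(y-b)}\,dy = -b - \frac{1-e^{\lambda b}}{\lambda},
\]
and therefore
\[
E|\tau_b| = -b + \frac{2e^{\lambda b}-1}{\lambda}.
\]
As a sanity check, at $b=0$ this gives $1/\lambda$, as it should.

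\emph{Optimality gap.} Plugging in $b^*$, where $e^{\lambda b^*}=1/2$, gives $E|\tau_{b^*}| = \ln(2)/\lambda$, which is consistent with the indifference relation \eqref{indifference}. Subtracting yields
\[
\Delta_\lambda(b)=\frac1\lambda(2e^{\lambda b}-1)-b-\frac{\ln 2}{\lambda}.
\]

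There is no real obstacle in this argument. The only points needing care are the sign bookkeeping in the integration by parts and checking that $b^*$ lies in $[S,0)$, which is precisely where the hypothesis on $\lambda$ enters.
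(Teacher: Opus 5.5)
Your proposal is correct and follows the route the paper intends: the paper gives no separate proof, treating the statement as a direct consequence of Theorem~\ref{criteria} with constant $\lambda$ plus a closed-form computation of $E|\tau_b|$. Your computations all check out, including $E|\tau_b| = -b + (2e^{\lambda b}-1)/\lambda$ and the role of $\lambda>\ln(2)/|S|$ in ensuring $b^*\in(S,0)$; as a small bonus, this closed form has second derivative $2\lambda e^{\lambda b}>0$, so it is strictly convex on $[S,0]$ and you can confirm optimality of $b^*$ directly, without relying on the first-order-condition sufficiency claim in the proof of Theorem~\ref{criteria}.
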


\section{Learning optimal rules}\label{sec 2}

Based on the Equation \eqref{criteria_optimal}, the optimal stopping time \(\tau_{b^*}\) can be determined with the knowledge of the intensity function \(\lambda\). In the following we assume that the agent does not know the intensity function \(\lambda\). We do assume, however, that the agent has to solve the stopping problem in many consecutive rounds and that in each round she observes the jump process up to the chosen stopping time. With the observations the agent can estimate the true intensity function $\lambda$. Hence, in round $n$ the agent can approximate the optimal stopping rule by using the observations made in previous rounds $0,1, \ldots, n-1$.

Let \((N^{0}_t)_{t\in [S, \infty)} , (N^{1}_t)_{t\in [S, \infty)}, \dots\) be a sequence of stochastic processes on a measurable space $(\Omega, \cF)$. Assume that for every positive and measurable intensity function \(\lambda:[S, \infty) \to (0,\infty)\) there exists a probability measure $P_\lambda$ such that \((N^{0}_t)_{t\in [S, \infty)} , (N^{1}_t)_{t\in [S, \infty)}, \dots\) is an independent sequence of Poisson processes with jump intensity \(\lambda\). 

Similar as in the previous section, we denote by $\tau^{i}_{b}$ the first jump time of $N^i$ after time $b \in [S, 0]$.  Moreover, the optimality gap under $P_\lambda$ is denoted by $\Delta_\lambda(b)$.

\begin{defi}[Policy]\label{defi policy}
A policy \((\pi_n)_{n\ge0}\) is a sequence of random variables with values in \([S,0]\) such that \(\pi_n\) is measurable with respect to \(\cF_{n},\) where $\cF_0 = \{\emptyset, \Omega\}$ and \(\cF_{n} := \bigvee_{k=0}^{n-1} \sigma (N^{\lambda,k}_t: t \in [S, \tau^k_{\pi_{k}}])\) for every \(n\ge 1.\) (Recall that if $(\mathcal A_i)_{i \in I}$ is a family of $\sigma$-algebras indexed by a set $I$, then $\bigvee_{i \in I} \cA_i$ denotes the smallest $\sigma$-algebra containing all $\mathcal A_i$.) We denote the set of policies by \(\Pi.\)
\end{defi}
We interpret \(\pi_n\) as the threshold the agent chooses in round \(n\) for the stopping rule. 
\\ \\
For a given positive and measurable intensity function \(\lambda\), we define the regret of an arbitrary policy \((\pi_n)_{n\ge0}\) in round \(T \in \mathbb{N}\) as
\[
\mathcal{R}^{\pi}_\lambda(T):=\sum_{n=0}^T E_\lambda[\Delta_\lambda (\pi_n)], 
\]
where $\Delta_\lambda$ is the optimality gap function defined. 

Next we describe a specific algorithm that leads to a policy with a regret growing logarithmically as the number of rounds converge to infinity. We later prove that there is no policy that can achieve a regret growing slower than the logarithm. In this sense the algorithm is asymptotically optimal.

\begin{algorithm} [{\bf Indifference level updating (ILU)}]
\mbox{} 
\begin{enumerate}
\item Initialize $\cI = \{0\}$. 
\item Choose the stopping time with threshold $0$ in round 0.
\item In any round \(n\ge 1\) do the following: 
\begin{enumerate}
    \item Compute 
    \begin{align}
\label{defi_int_jump}
\hat{\Gamma}(y):=\frac{\sum_{i \in \mathcal{I}} (N^i_0 - N^i_y)}{|\mathcal{I}|}, \quad y \in [S, 0], 
\end{align}
and 
\begin{align}
\label{defi_jump_est}
\hat{\varphi} := \frac{1}{|\mathcal{I}|} \sum_{i \in \mathcal{I}} \tau^{i}_0
\end{align}
\item Determine  $\hat{b}$ such that 
\begin{align}
\label{determinebhat}
\int_{\hat{b}}^0 e^{\hat{\Gamma}(y)} \, dy = \hat{\varphi}.
\end{align}
If there is no solution \(\hat{b} \in [S,0]\), then set \(\hat{b}=S\). 
\item Choose the stopping time with threshold $\hat{b}$.
\item If the stopping time takes a value greater than zero, then $\cI \leftarrow \cI \cup \{n\}$. 
\end{enumerate}
\end{enumerate} 
\end{algorithm}

Remark: Note that $\cI$ is the set of rounds where the algorithm stops after $0$. We refer to $\cI$ as the set of rounds with full information. The quantity \(N^j_0- N^j_y\) is the number of jumps on the interval \([y,0]\) in round \(j\), for \(y \in [S,0]\). 
The function $\hat \Gamma(y)$ is an estimator of the integrated jump intensity function $\Lambda(y)$, $y\in [S,0]$. Moreover, $\hat \varphi$ is an estimator of \(E_\lambda(\tau^i_0)\), the expected first jump time after 0. 

Since $\hat \Gamma$ and $\hat \varphi$ make only use of observations of the rounds in $\cI$, the sequence of thresholds chosen by the ILU algorithm is a policy in the sense of Definition \ref{defi policy}.

\section{Main results}

We start by defining an environment class of smooth intensity functions \(\lambda\).

\begin{defi}[Environment class \(\mathcal{M}(L)\), $L \in (1,\infty)$]
\leavevmode\\
We say \(\lambda \in \mathcal{M}(L)\) if and only if
\begin{enumerate}
\item \(\lambda\) is continuously differentiable;
\item \(\lambda\) is bounded from below by $\ln(2)/|S| + 1/L$;
\item \(\lambda\) and $\lambda'$ are bounded by $L$ on \( [S,\infty) \), i.e. \(\lambda(u) \le L\) and $|\lambda'(u)| \le L$, for all \(u\in [S,\infty)\). 
\end{enumerate}
\end{defi}
Notice that property 2 guarantees the following.  
\begin{lemma}\label{11Jan2026}
For any $\lambda \in \cM(L)$ there exists a unique $b^* \in (S,0)$ satisfying \eqref{criteria_optimal}; moreover this $b^*$ satisfies $b^* - S \ge |S|-\left(\frac{\ln(2)}{|S|}+\frac1L\right)^{-1} \ln(2) >0. $
\end{lemma}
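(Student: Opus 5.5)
Writing $\underline{\lambda} := \ln(2)/|S| + 1/L$ for the lower bound on $\lambda$, the plan is to study the function
\[
F(b) := \int_b^0 e^{\Lambda(y)}\,dy - \int_0^\infty e^{-\int_0^y \lambda(u)du}\,dy, \qquad b \in [S,0],
\]
and to locate its zero with the intermediate value theorem. By Theorem \ref{criteria}, a zero of $F$ is a solution of \eqref{criteria_optimal}.

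First, $F$ is well defined. Since $\lambda \ge \underline{\lambda} > 0$ on $[0,\infty)$, the right-hand integral is finite, and we have the bounds
\[
0 < \varphi := \int_0^\infty e^{-\int_0^y\lambda(u)du}\,dy \le \int_0^\infty e^{-\underline{\lambda} y}\,dy = \frac{1}{\underline{\lambda}}.
\]
Next, $F$ is continuous and strictly decreasing in $b$, because $F'(b) = -e^{\Lambda(b)} < 0$. Finally, $F(0) = -\varphi < 0$. Together these give uniqueness of any zero in $[S,0]$, which also matches the remark after Theorem \ref{criteria}.

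For existence, and for the quantitative bound, I would look for an explicit point $\bar b \in (S,0)$ with $F(\bar b) \ge 0$. The key estimate is $\Lambda(y) \ge \underline{\lambda}|y|$ for $y \le 0$. It gives
\[
\int_b^0 e^{\Lambda(y)}\,dy \ge \int_b^0 e^{-\underline{\lambda} y}\,dy = \frac{e^{\underline{\lambda}|b|}-1}{\underline{\lambda}}.
\]
This lower bound is at least $1/\underline{\lambda} \ge \varphi$ as soon as $e^{\underline{\lambda}|b|} \ge 2$, that is, as soon as $|b| \ge \ln(2)/\underline{\lambda}$. So I set $\bar b := -\ln(2)/\underline{\lambda}$ and obtain $F(\bar b) \ge 0$.

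The condition $\underline{\lambda} > \ln(2)/|S|$ gives $\ln(2)/\underline{\lambda} < |S|$, so $\bar b \in (S,0)$. Continuity and monotonicity of $F$ then yield a unique zero $b^* \in [\bar b, 0)$; it is strictly below $0$ because $F(0) < 0$. Consequently
\[
b^* - S \ge \bar b - S = |S| - \underline{\lambda}^{-1}\ln(2) > 0,
\]
which is exactly the stated bound. This also shows $b^* \in (S,0)$, which rules out any zero at the endpoint $S$ or anywhere else in $[S,\bar b)$.

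I do not expect a genuine obstacle. The only point needing care is comparing both integrals against the constant-intensity case, as in Corollary \ref{constant_lambda_cost}. The bound on $\varphi$ uses the lower bound on $\lambda$ on $[0,\infty)$, while the bound on the left-hand integral uses the same lower bound on $[S,0]$; it is this combination that makes the threshold $\ln(2)/\underline{\lambda}$ appear.
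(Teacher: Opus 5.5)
Your proof is correct and follows the paper's idea: compare with the constant intensity $\underline{\lambda}=\ln(2)/|S|+1/L$, whose indifference level is $-\ln(2)/\underline{\lambda}$. The paper only asserts that this constant intensity gives the smallest possible $b^*$; you justify that step by bounding both sides of \eqref{criteria_optimal} against the constant case and using that $F$ is strictly decreasing, which also gives existence and uniqueness of $b^*$ in $(S,0)$.
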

\begin{proof}
The smallest possible value for $b^*$, denoted by \(b^*_{\min}\), is achieved by $\lambda \in \cM (L)$ that is constant equal to the minimal value. Hence \(b^*_{\min}\) satisfies
\begin{align}
\label{b_min_log}
& \int_{b^*_{min}}^0 e^{-(\frac{\ln(2)}{|S|}+\frac1L)u} du = \left(\frac{\ln(2)}{|S|}+\frac1L\right)^{-1}. 
\end{align}
A straightforward computation shows that
\begin{align*}
b^*_{min} = -\left(\frac{\ln(2)}{|S|}+\frac1L\right)^{-1} \ln(2) > -\frac{|S|}{\ln(2)}\ln(2)=S.   
\end{align*}
\end{proof}
In the following we denote by $\cR^{ILU}_\lambda(T)$ the regret in round $T \in \N$ entailed by the ILU algorithm under the probability measure $P_\lambda$. 

\begin{theo}[Upper bound of ILU on class $\cM(L)$]
\label{main_res_1}
There exists a constant \(C\in \mathbb{R}\), depending only on $S$ and $L$, such that for all \(T\in \N,\) we have
\begin{align*}
\sup_{\lambda\in \cM(L)} \mathcal{R}^{ILU}_\lambda(T) \le C \ln(T+1).
\end{align*}
\end{theo}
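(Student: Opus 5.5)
The plan is to bound the per-round regret $E_\lambda[\Delta_\lambda(\pi_n)]$ by a constant times $E_\lambda[(\hat b_n - b^*)^2]$, and then to show that this mean square error is $\cO(1/n)$ uniformly over $\cM(L)$. Summing $\sum_{n\le T} C/n \le C\ln(T+1)$ then gives the claim, with round $0$ contributing a bounded term.

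\emph{Step 1 (quadratic bound on $\Delta_\lambda$).} Since $b^*$ is an interior minimiser by Lemma \ref{11Jan2026}, we have $\Delta_\lambda(b^*)=\Delta_\lambda'(b^*)=0$. Taylor's formula together with Lemma \ref{suff condi Delta smooth} gives $\Delta_\lambda(b)\le \tfrac12 \sup_{[S,0]}|\Delta_\lambda''|\,(b-b^*)^2$. From \eqref{sec deri Delta}, $|\Delta_\lambda''|$ is bounded uniformly in $\lambda\in\cM(L)$: $\lambda,|\lambda'|\le L$, $|y|\le|S|$ on $[S,0]$, and $\int_0^\infty y\lambda(y)e^{-\int_b^y\lambda}dy\le L\int_0^\infty y e^{-y/L}dy$ because $\lambda\ge 1/L$. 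So $\Delta_\lambda(b)\le K(b-b^*)^2$ with $K=K(S,L)$.

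\emph{Step 2 (sensitivity of $\hat b$).} Write $F(b,\Gamma)=\int_b^0 e^{\Gamma(y)}dy$ and $\varphi=E_\lambda\tau_0$. Then $b^*$ solves $F(b^*,\Lambda)=\varphi$ and $\hat b$ solves $F(\hat b,\hat\Gamma)=\hat\varphi$, with $|\partial_b F|=e^{\Gamma(b)}\ge 1$. We therefore get $|\hat b-b^*|\le |F(b^*,\hat\Gamma)-F(b^*,\Lambda)|+|\hat\varphi-\varphi|$, at least when a solution $\hat b\in[S,0]$ exists. In the case $\hat b=S$ the same inequality holds, because $F(S,\hat\Gamma)<\hat\varphi$ and $F$ is monotone in $b$. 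By the mean value theorem, $|e^{\hat\Gamma}-e^{\Lambda}|\le e^{\max(\hat\Gamma,\Lambda)}|\hat\Gamma-\Lambda|$. The exponential of $\hat\Gamma$ is unbounded, so I would truncate: on the event $\sup_y|\hat\Gamma-\Lambda|\le 1$ the factor is at most $e^{L|S|+1}$. On the complement, $|\hat b-b^*|\le|S|$ and the event has probability $\cO(1/m)$ by Chebyshev/Doob applied to the centred martingale $\hat\Gamma-\Lambda$. Hence, with $m=|\cI|$,
\[
E[(\hat b-b^*)^2\mid |\cI|=m]\le C\Big(E\sup_y|\hat\Gamma-\Lambda|^2+E|\hat\varphi-\varphi|^2\Big)+\frac{C}{m}\le \frac{C}{m}.
\]
This uses Doob's $L^2$ inequality, with $\Var(N_0-N_S)\le L|S|$, and $\Var(\tau_0)\le E\tau_0^2\le 2L^2$. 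A conditioning subtlety needs care: given $\cI$, the samples in $\cI$ are not simply i.i.d., since membership depends on the round's own path (namely no jump in $[\pi_n,0]$). I would handle this by noting that, given the past, the event $\{n\in\cI\}$ depends only on $N^n$ restricted to $[\pi_n,0]$. For a cleaner argument I would instead bound $\sup_y|\hat\Gamma-\Lambda|$ via a martingale in the round index, namely the sum over $k<n$ of $1_{\{k\in\cI\}}(N^k_0-N^k_y-\Lambda(y))$. This is \emph{not} centred, however, so I would switch to the observation that on $[S,0]$ the full path is seen in every round whenever $\pi_k\ge$ the path segment. This is the step I expect to be the main obstacle, and the fallback is to argue that the selection bias is controlled because $\pi_k\ge S$ and the estimator can be reweighted. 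If that fails, I would modify the analysis to use all rounds for the segment $[S,\pi_k]$.

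\emph{Step 3 (enough full-information rounds).} Each round $k$ lands in $\cI$ with conditional probability $e^{-\int_{\pi_k}^0\lambda}\ge e^{-L|S|}=:p>0$, independently given the past. A Chernoff/Freedman bound gives $P(|\cI_n|<pn/2)\le e^{-cn}$. Splitting on this event yields $E[(\hat b_n-b^*)^2]\le C/n+S^2e^{-cn}$. Combining with Step 1 gives $E_\lambda\Delta_\lambda(\pi_n)\le C/n$, uniformly over $\cM(L)$, and summing over $n$ finishes the proof.
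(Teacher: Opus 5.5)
\textbf{There is a genuine gap.} It is the one you flag yourself in Step 2, and for the algorithm as written it cannot be closed.

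Conditionally on $\cF_k$ and on $\{k\in\cI\}=\{\tau^k_{\pi_k}>0\}$, the path $N^k$ has \emph{no} jump on $(\pi_k,0]$. It is still Poisson on $[S,\pi_k]$ and on $(0,\infty)$. Hence $N^k_0-N^k_y=(N^k_{\pi_k}-N^k_y)\mathbf{1}_{\{y<\pi_k\}}$ and
\[
E_\lambda\bigl[N^k_0-N^k_y\,\big|\,\cF_k,\ k\in\cI\bigr]=\Lambda(y)-\Lambda(\max\{y,\pi_k\}).
\]
This bias is of order one and does not shrink as $|\cI|$ grows. Only $\hat\varphi$ is unaffected, because $\tau^k_0$ depends on the path after $0$ alone.

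Your fallbacks do not remove the bias. Reweighting by $1/P_\lambda(k\in\cI\mid\cF_k)$ corrects \emph{which} rounds are sampled, but not the conditional law of a sampled path, because the selection event is a function of the very segment being estimated. Using each round only on $[S,\pi_k]$ leaves $(\pi_k,0]$ covered by biased data only.

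In fact consistency fails. If $\pi_k\to b^*$, then $\hat\Gamma\to0$ on $(b^*,0]$, while $\hat\varphi\to E_\lambda\tau_0=\int_{b^*}^0e^{\Lambda(y)}\,dy>|b^*|$. The defining equation of $\hat b$ then keeps $\hat b$ a fixed distance below $b^*$. For example, for constant $\lambda$ with $1/\lambda<|S|$, thresholds concentrating at $\bar b$ reproduce themselves only for $\bar b=-1/\lambda$, not for $-\ln(2)/\lambda$. Combining this with Proposition \ref{propo opti gap}, one checks that for constant $\lambda\in\cM(L)$ the ILU regret is not even $o(T)$. So no $\cO(1/n)$ bound on $E_\lambda[(\pi_n-b^*)^2]$ is available.

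\textbf{The paper does not supply the missing idea either.} Its proof rests on \eqref{reduction}. That identity is justified by the assertion that on $\{|\cI_n|=i\}$ the threshold $\pi_n$ has the law of the full-information estimator $\hat b_i$ built from $i$ unconditioned i.i.d.\ samples. This ignores exactly the selection effect you noticed.

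Apart from this point, your Steps 1--3 follow the paper's route:
\begin{enumerate}
\item a uniform bound on $\Delta_\lambda''$ plus Taylor;
\item an $\cO(1/m)$ MSE bound from $m$ full-information samples, as in Proposition \ref{MSEb};
\item a full-information probability of at least $e^{-L|S|}$ per round, which the paper uses through $E_\lambda[\sigma(j+1)-\sigma(j)]\le e^{L|S|}$ rather than a Chernoff bound.
\end{enumerate}
Your handling of the case $\hat b=S$, via monotonicity of $b\mapsto\int_b^0e^{\hat\Gamma(y)}\,dy$, is cleaner than the paper's event $B_n$ and Lemma \ref{new_all_in_ball}.

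To obtain a valid logarithmic bound you have to change the estimator of $\Lambda$ so that it uses only unbiased information. One option is a hazard-type (Nelson--Aalen) estimator built from every round's path up to its stopping time, with at-risk normalization; its at-risk count on $[S,0]$ is at least $|\cI_n|$. You would then need to prove an $\cO(1/n)$ error bound for that estimator, taking into account that the thresholds, and hence the at-risk sets, are chosen adaptively.
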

The proof is provided in Section \ref{sec proof main res 1}. \\

\begin{theo}[Lower bound on class $\cM(L)$]
\label{main_res_3}
Suppose that $L$ is larger than $\ln(2)/|S|+1/L$, and hence $\cM(L)$ is non-empty. Then there exists $c \in (0,\infty)$ such that for all $T \in \N$
\[
\inf_{\text{policy } \pi} \sup_{\lambda \in \mathcal{M}(L)} 
\mathcal{R}^\pi_\lambda(T) \ge c \ln(T).
\]
\end{theo}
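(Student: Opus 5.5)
We prove the bound with a two-point (Le Cam type) argument, in the style of lower bounds for parametric experiments with partial feedback. We build a one-parameter family $\lambda_\theta$, $\theta\in[\theta_0-\delta_0,\theta_0+\delta_0]$, inside $\cM(L)$. The constant $\lambda\equiv\bar\lambda$ with $\ln(2)/|S|+1/L<\bar\lambda<L$ lies in the interior, and by hypothesis such a $\bar\lambda$ exists. We then perturb it only on the negative half-line, say $\lambda_\theta(u)=\theta$ for $u\le 0$, smoothly glued to $\bar\lambda$ on $[0,1]$ with uniformly bounded derivative. By Theorem \ref{criteria}, $b^*(\theta)$ depends smoothly on $\theta$, and $|\partial_\theta b^*|\ge \kappa>0$. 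This follows from the implicit function theorem applied to \eqref{criteria_optimal}: differentiating the left-hand side in $\theta$ gives something nonzero, while the right-hand side does not depend on $\theta$. By Lemma \ref{suff condi Delta smooth}, $\Delta_{\lambda_\theta}$ is $C^2$, has its minimum at $b^*(\theta)\in(S,0)$ (Lemma \ref{11Jan2026}), and satisfies $\Delta_{\lambda_\theta}''(b^*(\theta))\ge 2\mu>0$ uniformly in $\theta$. At $b^*$ the first bracket in \eqref{sec deri Delta} vanishes by the FOC, leaving $\lambda(b^*)>0$. Hence $\Delta_{\lambda_\theta}(b)\ge \mu'\,(b-b^*(\theta))^2$ on $[S,0]$, where compactness handles $b$ far from $b^*$.

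The key point is the information structure. In round $k$ the agent observes $N^k$ on $[S,\tau^k_{\pi_k}]$. The two measures $P_{\lambda_\theta}$ and $P_{\lambda_{\theta'}}$ differ only through the jump behaviour on $[S,\tau^k_{\pi_k}]\cap(-\infty,0]$, so the KL divergence of a single round's observation is at most $C(\theta-\theta')^2\,E[\tau_{\pi_k}-S]$. More importantly, observations give essentially no information unless one learns about $\lambda$ near $b^*$. That alone is not enough for a $\ln T$ bound; it would give the usual $\sqrt T$-free $O(1)$ per round. The logarithmic rate must come from the fact that information is always gathered at rate $O(1)$ per round while the regret is quadratic in the estimation error. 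This is the classical ``no informational cost of exploration'' setting: the Fisher information per round is bounded above by a constant $I_{\max}$ uniformly over policies, since $\tau\le$ the first jump after $0$ has all moments. Therefore the Bayesian or van Trees inequality applies. Place a smooth prior $q$ on $\theta$ supported in the interval. For each round $n$, with $\hat b_n=\pi_n$ being $\cF_n$-measurable,
\[
\int E_{\lambda_\theta}\big[(\pi_n-b^*(\theta))^2\big]q(\theta)\,d\theta\ \ge\ \frac{\big(\int \partial_\theta b^*(\theta) q(\theta)d\theta\big)^2}{n I_{\max}+J(q)}\ \ge\ \frac{\kappa^2}{nI_{\max}+J(q)},
\]
where $J(q)$ is the prior's Fisher information. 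Here the Fisher information of $\cF_n$ is bounded by $nI_{\max}$ via the chain rule, using the Poisson likelihood $\prod_k\exp(\sum\ln\lambda(\text{jumps})-\int_S^{\tau^k}\lambda)$ on the observed window, whose score has variance $\le C\,E[\tau^k_{\pi_k}-S]\le C'$.

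Summing over $n\le T$ gives
\[
\sup_\theta \cR^\pi_{\lambda_\theta}(T)\ \ge\ \int \cR^\pi_{\lambda_\theta}(T)\,q(d\theta)\ \ge\ \mu'\sum_{n=1}^T\frac{\kappa^2}{nI_{\max}+J(q)}\ \ge\ c\ln T,
\]
which proves the theorem since the family lies in $\cM(L)$.

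The main obstacle is making the van Trees step rigorous in this partially observed, adaptively stopped setting. We need the likelihood of $\cF_n$ with respect to a fixed reference measure (unit-rate Poisson), its differentiability in $\theta$, and the uniform bound on the conditional Fisher information per round despite $\pi_k$ depending on the past. We will handle this by writing the score as a sum of martingale differences $\sum_k\big(\sum_{\text{jumps }t\le\tau^k}\partial_\theta\ln\lambda_\theta(t)-\int_S^{\tau^k}\partial_\theta\lambda_\theta\big)$. Their conditional variances equal $E[\int_S^{\tau^k}(\partial_\theta\lambda_\theta)^2/\lambda_\theta\,|\,\cF_k]\le C$, because $\partial_\theta\lambda_\theta$ vanishes beyond $1$ and $\lambda_\theta$ is bounded below.
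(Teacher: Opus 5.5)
Your argument is correct in outline, and it shares the paper's core. Both restrict to a one-parameter subfamily, apply the van Trees inequality with a smooth prior that vanishes at the endpoints, use a uniform quadratic bound $\Delta_\lambda(b)\ge c\,(b-b^*)^2$, and sum $\sum_{n\le T}1/(c_1+c_2n)\ge c\ln T$. Where you differ is in how you handle the information structure.

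The paper never works with the partial-feedback data. It restricts to constant intensities and enlarges the policy class to full-information policies. It then uses sufficiency to reduce each round to a Poisson count, and cuts policies off at $-\ln(2)/L$ so that $-\ln(2)/\pi_n$ becomes an estimator of $\lambda$. Finally it applies the textbook i.i.d.\ van Trees bound for the parameter itself.

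You keep the adaptively stopped observations. You bound the Fisher information of $\cF_n$ by $nI_{\max}$ using the point-process likelihood and the martingale structure of the score. You then use the functional form of van Trees with $\psi=b^*$, which only needs $|\partial_\theta b^*|\ge\kappa$ and makes the cut-off and reparametrization unnecessary.

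The paper's route is more elementary and gives explicit constants. Yours needs a Girsanov-type likelihood for stopped Poisson paths, but it automatically counts everything the agent actually sees, including the segment $(0,\tau^k]$. The paper's reduction to $N^k_0$ in fact drops this segment: the full-information filtration also reveals $\tau^k_0$, which adds $1/\lambda^2$ to the per-round information. This is harmless for the rate.

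There are two points to correct.

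\textbf{The glued family.} With your glued family, the right-hand side of \eqref{criteria_optimal} does depend on $\theta$, through $\lambda_\theta$ on $[0,1]$. Likewise, $P_{\lambda_\theta}$ and $P_{\lambda_{\theta'}}$ differ there too, not only on $(-\infty,0]$.

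\begin{itemize}
\item The conclusion $\partial_\theta b^*\neq 0$ still holds if you glue by a convex combination $\lambda_\theta=(1-\phi)\theta+\phi\bar\lambda$ with $0\le\phi\le 1$. Then the right-hand side is nonincreasing in $\theta$, while the left-hand side is strictly increasing.
\item Simpler still, take $\lambda\equiv\theta$. Then $b^*(\theta)=-\ln(2)/\theta$ explicitly, and the per-round information $E[\tau_{\pi_k}-S]/\theta\le(|S|+1/\theta)/\theta$ is already bounded. The gluing therefore buys nothing.
\end{itemize}

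\textbf{The two-point framing.} Drop the opening ``two-point (Le Cam)'' framing. A fixed pair of hypotheses yields only an $O(1)$ bound on the regret. It is the Bayesian averaging in van Trees that produces the $\ln T$.
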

The proof is provided in Section \ref{lowerbound}.\\
Theorem \ref{main_res_1} and Theorem \ref{main_res_3} imply that the ILU algorithm has an asymptotically optimal growth rate.

\section{MSE of the estimated threshold}
\label{regret_bounds}
We use the setting of Section \ref{sec 2}. To simplify notation we omit the superscript zero and write \(\tau_b = \tau^0_b\). Note that $\tau_b$ has the same distribution as $\tau_b^i$ for any $i \ge 1$, under any $P_\lambda$.

In the following let $\lambda \in \cM(L)$. To simplify notation we omit $\lambda$ in $P_\lambda$, $E_\lambda$, $\Var_\lambda$ etc.

\begin{lemma}
\label{Var_schranke}
We have \(\Var(\tau_0)<\infty\).
\end{lemma}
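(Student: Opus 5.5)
The plan is to bound the second moment $E[\tau_0^2]$ directly through the tail-integral formula. Under $P=P_\lambda$ with $\lambda\in\cM(L)$, the first jump time after $0$ satisfies $\tau_0>0$ almost surely. Its survival function is the one computed in the intuition paragraph after Theorem \ref{criteria}:
\[
P(\tau_0>t)=P(N_t-N_0=0)=e^{-\int_0^t\lambda(u)\,du}, \qquad t\ge 0 .
\]
Since $\tau_0\ge 0$, Fubini gives
\[
E[\tau_0^2]=\int_0^\infty 2t\,P(\tau_0>t)\,dt=\int_0^\infty 2t\,e^{-\int_0^t\lambda(u)\,du}\,dt .
\]
This identity reduces the lemma to showing that the last integral is finite.

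For finiteness we use property 2 of the class $\cM(L)$: $\lambda(u)\ge \ln(2)/|S|+1/L=:\underline\lambda>0$ for all $u\ge S$. Hence $\int_0^t\lambda(u)\,du\ge \underline\lambda t$, and
\[
E[\tau_0^2]\le \int_0^\infty 2t\,e^{-\underline\lambda t}\,dt=\frac{2}{\underline\lambda^2}<\infty .
\]
In particular $E[\tau_0]\le 1/\underline\lambda<\infty$, so $\Var(\tau_0)=E[\tau_0^2]-(E\tau_0)^2\le 2/\underline\lambda^2$ is finite.

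This argument yields a bound depending only on $S$ and $L$, uniformly over $\cM(L)$, which will presumably be needed later for MSE bounds on $\hat\varphi$. An equivalent route is stochastic domination: $\tau_0$ is dominated by an exponential random variable with rate $\underline\lambda$. There is no real obstacle here. The only point needing care is the tail formula for $P(\tau_0>t)$, which follows from the independent Poisson increments exactly as noted after Theorem \ref{criteria}.
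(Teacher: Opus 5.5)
Your proof is correct and follows the paper's argument essentially verbatim. Like the paper, you write $E(\tau_0^2)=2\int_0^\infty u\,P(\tau_0>u)\,du$, insert the survival function $e^{-\int_0^u\lambda(y)\,dy}$, and use the lower bound $\underline\lambda=\ln(2)/|S|+1/L$ from property 2 of $\cM(L)$ to get $E(\tau_0^2)\le 2/\underline\lambda^2$. Your uniform bound $\Var(\tau_0)\le 2/\underline\lambda^{2}$ is the correct form of the estimate the paper later cites. The paper's printed ``$\Var(\tau_0)\le 2L_{low}^2$'' should read $2L_{low}^{-2}$.
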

\begin{proof}
Since $\lambda \in \cM(L)$, we have \(\lambda(u) \ge C\)  for all \(u \in [S,\infty)\), where $C:= \ln(2)/|S|+\frac1L$. 
Moreover, 
\begin{align*}
E(\tau_0^2) &= \int_0^\infty P(\tau_0^2 >t) dt = \int_0^\infty P(\tau_0 >\sqrt{t})dt \\
&=2 \int_0^\infty P(\tau_0 >u)u du \\
&= 2 \int_0^\infty e^{-\int_0^u \lambda(y)dy}u du \\
&\le 2 \int_0^\infty e^{-C u}u du \\
&=2 \left([-\frac{1}{C} e^{-C u}u]_0^\infty + \frac1C \int_0^\infty e^{-Cu}\right) \\
&= \frac{2}{C^2} < \infty.
\end{align*} 
It is obvious that \(|E(\tau_0)| < \infty\). Therefore \(\Var(\tau_0)<\infty\).
\end{proof}
We first analyze the ILU algorithm under the additional assumption of full information. We use this assumption to make it easier to work with the estimators. 

\begin{defi}[Full Information Policy]\label{defi full policy}
A policy \((\pi^{full}_n)_{n\ge0}\) is a sequence of random variables with values in \([S,0]\) such that \(\pi_n\) is measurable with respect to \(\cF_{n},\) where $\cF_0 = \{\emptyset, \Omega\}$ and \(\cF_{n} := \bigvee_{k=0}^{n-1} \sigma (N^{\lambda,k}_t: t \in [S, \tau^k_{0}])\) for every \(n\ge 1.\) We denote the set of policies by \(\Pi^{full}.\)
\end{defi}

In the following, we consider the estimators
\[\hat{\Lambda}_n(y):=\frac{\sum_{i=0}^{n-1} (N^i_0 - N^i_y)}{n},\]
and 
\[\hat{\tau}_{0,n} := \frac1n \sum_{i=0}^{n-1} \tau^i_{0},\]
for \(n\ge1\).

Note that \((\hat{\Lambda}_n)\) and \((\hat{\tau}_{0,n})\) are exactly the counterparts to the estimators \(\hat{\Gamma}\) and \(\hat{\varphi}\) from the ILU Algorithm, under the additional assumption that full information is always available. 

\begin{defi}
Let \(\hat{b}_0:=0.\) For every \(n \in \mathbb{N}\) let \(\hat b_n\) be the real in $[S,0]$ that satisfies 
\begin{align}
\label{determinebhat}
\int_{\hat{b}_n}^0 e^{\hat{\Lambda}_n(y)} \, dy = \hat{\tau}_{0,n}. 
\end{align}
If there is no \(\hat{b}_n \in [S,0]\) satisfying Equation \eqref{determinebhat}, we set \(\hat{b}_n=S.\)
\end{defi}
Note that \((\hat b_n) \in \Pi^{full}\), but not necessarily a policy in the sense of Definition \ref{defi policy}.
\begin{lemma}
\label{ordnungLambda}
For all \(y\in [S,0]\) we have that \(\MSE(\hat{\Lambda}_n(y))= \frac1n \Lambda(y)\) and \(\hat{\Lambda}_n(y)\) is an unbiased estimator.
\end{lemma}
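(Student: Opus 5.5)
The plan is to use only the Poisson distribution of the increments and the independence of the rounds under $P_\lambda$. Fix $y\in[S,0]$.

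For each $i$, the random variable $X_i:=N^i_0-N^i_y$ is the increment of the Poisson process $N^i$ over $[y,0]$. By the defining property of an inhomogeneous Poisson process (Section \ref{optimal_criteria_chapter}), it is Poisson distributed with parameter $\int_y^0\lambda(u)\,du=\Lambda(y)$. Hence $E[X_i]=\Lambda(y)$ and $\Var(X_i)=\Lambda(y)$. This parameter is finite because $\lambda\le L$ on $[S,0]$ for $\lambda\in\cM(L)$ (in fact $\Lambda(y)\le L|S|$), so all moments exist.

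Unbiasedness then follows from linearity of expectation:
\[
E[\hat\Lambda_n(y)]=\frac1n\sum_{i=0}^{n-1}E[X_i]=\Lambda(y).
\]

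For the MSE, unbiasedness gives $\MSE(\hat\Lambda_n(y))=\Var(\hat\Lambda_n(y))$. The processes $N^0,\dots,N^{n-1}$ are independent under $P_\lambda$, so the $X_i$ are independent and identically distributed. Therefore
\[
\Var\Bigl(\frac1n\sum_{i=0}^{n-1}X_i\Bigr)=\frac1{n^2}\sum_{i=0}^{n-1}\Var(X_i)=\frac{n\,\Lambda(y)}{n^2}=\frac1n\Lambda(y),
\]
which is the claimed identity.

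I do not expect a real obstacle here. The only points to state explicitly are that $X_i$ is exactly Poisson with mean $\Lambda(y)$, so that mean and variance coincide, and that the independence across rounds is part of the definition of $P_\lambda$ in Section \ref{sec 2}. The lemma is set in the full-information setting, where every round is observed on $[S,0]$, so the estimator uses all $n$ samples with no random index set, and no conditioning argument is needed.
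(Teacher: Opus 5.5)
Your proposal is correct and is exactly the standard argument the paper leaves as ``straightforward.'' Each $N^i_0-N^i_y$ is Poisson with mean and variance $\Lambda(y)$, and independence across rounds gives unbiasedness and $\MSE(\hat\Lambda_n(y))=\Var(\hat\Lambda_n(y))=\Lambda(y)/n$.
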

\begin{proof}
Straightforward. 
\end{proof}

\begin{lemma}
\label{ordnungTau}
It holds that the \(\MSE(\hat{\tau}_{0,n})= \frac1n \Var(\tau_0)\) and \(\hat{\tau}_{0,n}\) is an unbiased estimator.
\end{lemma}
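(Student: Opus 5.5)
The statement concerns the sample mean $\hat{\tau}_{0,n}=\frac1n\sum_{i=0}^{n-1}\tau^i_0$ of i.i.d.\ copies of $\tau_0$. The plan is the standard bias--variance computation: first check that $\tau_0$ is square integrable, then prove unbiasedness, and finally compute the variance using independence.

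\emph{Integrability.} Lemma \ref{Var_schranke} gives $\Var(\tau_0)<\infty$, so $E(\tau_0)$ and $E(\tau_0^2)$ are finite. Since $\lambda\ge \ln(2)/|S|+1/L>0$, the integral $\int_0^u\lambda$ diverges as $u\to\infty$. Hence $\tau_0<\infty$ almost surely.

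\emph{Unbiasedness.} Under $P_\lambda$ the processes $N^0,N^1,\dots$ are i.i.d.\ Poisson processes with intensity $\lambda$. Each $\tau^i_0$ is the same measurable functional of $N^i$ (the first jump after $0$), so $\tau^i_0$ has the law of $\tau_0$. Linearity of expectation then gives
\begin{align*}
E(\hat{\tau}_{0,n})=\frac1n\sum_{i=0}^{n-1}E(\tau^i_0)=E(\tau_0).
\end{align*}

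\emph{Mean square error.} Because the estimator is unbiased, $\MSE(\hat{\tau}_{0,n})=\Var(\hat{\tau}_{0,n})$. The variables $\tau^i_0$ are functionals of independent processes, so they are independent and all covariances vanish. Therefore
\begin{align*}
\Var(\hat{\tau}_{0,n})=\frac1{n^2}\sum_{i=0}^{n-1}\Var(\tau^i_0)=\frac1n\Var(\tau_0).
\end{align*}

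The only step needing any care is the measurability claim: that $\tau^i_0$ is a measurable function of the path $N^i$ alone, so that the i.i.d.\ structure of the processes transfers to the sequence $(\tau^i_0)$. For a right-continuous counting process this is routine. One writes $\{\tau^i_0>t\}=\{N^i_t=N^i_0\}$, which shows that $\tau^i_0$ depends only on $N^i$ and has the same law for every $i$.
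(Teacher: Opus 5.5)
Your proof is correct and is the same standard i.i.d.\ sample-mean bias--variance computation that the paper leaves as ``straightforward.'' Citing Lemma \ref{Var_schranke} for square integrability supplies the one ingredient the paper leaves implicit.
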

\begin{proof}
Straightforward. 
\end{proof}

\begin{lemma}
\label{sup_order_aux}
For all \(n\in \mathbb{N}\) we have
\begin{align}
\label{end_estimate_sup}
E(\sup_{x\in [S,0]} (\hat{\Lambda}_n(x) - \Lambda(x))^2)\le 4\Lambda(S) \frac1n \le 4 L |S| \frac1n
\end{align}
\end{lemma}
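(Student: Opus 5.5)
The plan is to recognize $M_x := \hat{\Lambda}_n(x) - \Lambda(x)$, read with $x$ running from $0$ down to $S$, as a martingale, and then apply Doob's $L^2$ maximal inequality. Introduce the reversed variable $s = -x \in [0,|S|]$ and set
\[
Z_s := \frac1n \sum_{i=0}^{n-1}\bigl(N^i_0 - N^i_{-s}\bigr) - \int_{-s}^0 \lambda(u)\,du .
\]
Each process $s\mapsto N^i_0-N^i_{-s}$ counts the jumps of $N^i$ in $[-s,0]$. By the independent increments of the Poisson process it is again a process with independent increments. Its increment over $[s,s']$ is Poisson distributed with mean $\int_{-s'}^{-s}\lambda(u)\,du$. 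Subtracting the compensator $\Lambda(-s)$ therefore gives a martingale with respect to the filtration generated by the jumps in $[-s,0]$. The $n$ rounds are independent, so the average $Z$ is a martingale with respect to the joint filtration. Its paths are càdlàg, or càglàd depending on the convention, since $N$ is right-continuous in $x$. In either case the supremum over $[0,|S|]$ equals the supremum over a countable dense set together with the endpoints. Passing to the limit along finite subsets takes care of measurability and lets us apply Doob's inequality.

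Doob's $L^2$ inequality then gives
\[
E\Bigl(\sup_{s\in[0,|S|]} Z_s^2\Bigr) \le 4\,E\bigl(Z_{|S|}^2\bigr) = 4\,\MSE(\hat{\Lambda}_n(S)) = \frac{4\Lambda(S)}{n},
\]
where the last equality is Lemma~\ref{ordnungLambda} at $y=S$. One can also compute this variance directly: it is the variance of an average of $n$ i.i.d.\ Poisson$(\Lambda(S))$ variables. Since $\sup_{x\in[S,0]}(\hat{\Lambda}_n(x)-\Lambda(x))^2 = \sup_s Z_s^2$, the first inequality follows.

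The second inequality comes from the bound $\lambda\le L$ for $\lambda\in\cM(L)$, which gives $\Lambda(S)=\int_S^0\lambda(u)\,du \le L|S|$. The only step needing care is the martingale property of the time-reversed counting process, together with the path regularity used to justify taking a supremum over an uncountable index set. Both are routine consequences of independent increments and right-continuity. As an alternative, one can apply Doob's inequality on finite grids and then use monotone convergence.
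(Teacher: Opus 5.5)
Your proposal is correct and follows essentially the paper's proof. You reverse time so that $\hat{\Lambda}_n(-y)-\Lambda(-y)$ becomes a square-integrable martingale, apply Doob's $L^2$ maximal inequality to get the bound $4\,\MSE(\hat{\Lambda}_n(S)) = 4\Lambda(S)/n$ via Lemma~\ref{ordnungLambda}, and finish with $\Lambda(S)\le L|S|$. Your remarks on path regularity and the reduction to a countable index set just spell out what the paper asserts in one line; the only slip is that $N^i_0-N^i_{-s}$ counts jumps in $(-s,0]$ rather than $[-s,0]$, which is harmless.
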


\begin{proof}
First, notice that \(M(y):=\hat{\Lambda}_n(-y) - \Lambda(-y) \), $y \in [0, |S|]$, is a time-continuous martingal w.r.t.\ the natural filtration \((\cF_y)_{y\in [0,|S|]}\), where \(\cF_y=\sigma(N^i_0- N^{i}_t: -y \le t \le 0, i=0,\dots,n-1).\) Note that $(M(y))$ is a square integrable martingale with left-continuous paths. Therefore, Doob's \(L^2-\)maximal inequality for is applicable and we get
\begin{align*}
   E(\sup_{x\in [S,0]} (\hat{\Lambda}_n(x) - \Lambda(x))^2) &= E[\sup_{y \in [0, |S|]} M(y)^2] \\ 
   &\le 4 E[M(|S|)^2] = E((\hat{\Lambda}_n(S) - \Lambda(S))^2).
\end{align*}
With Lemma \ref{ordnungLambda} we get 
\begin{align*}
    E(\sup_{x\in [S,0]} (\hat{\Lambda}_n(x) - \Lambda(x))^2) \le 4  \Lambda(S)\frac1n,
\end{align*}
and with the simple estimate \(\Lambda(S) \le L|S|\) we have \eqref{end_estimate_sup}.
\end{proof}

\begin{proposition}
\label{MSEb}
Let \(M:=\min\{m\in \mathbb{N}_0: S+m\frac{1}{2L} \ge 0\}\). Then, for all \(n\in \mathbb{N}\), we have 
\begin{align}
\label{exact_MSE_prov_ass}
E\left(\left(\hat{b}_n-b^*\right)^2\right) &\le \left( S^2 \frac{4 \Lambda(S) + \Var(\tau_0)}{\epsilon^2} \right)\frac1n + \\  
&\quad +\left( S^2 4 \Lambda(S) (M+1) + 2 \Var(\tau_0) + 8 |S| \int_S^0 e^{2\Lambda(y)} \Lambda(y)\, dy  \right) \frac1n, \notag 
\end{align}
where \(0<\epsilon < \min\{E(\tau_0),1,\frac{b^*-S}{3+2 E(\tau_0)}\}.\)
\end{proposition}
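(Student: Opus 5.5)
We split $E[(\hat b_n-b^*)^2]$ according to a good event $G_n$ and its complement, and use two different bounds on the two pieces. Let
\[
F(b):=\int_b^0 e^{\Lambda(y)}\,dy, \qquad \hat F_n(b):=\int_b^0 e^{\hat\Lambda_n(y)}\,dy .
\]
Then $b^*$ solves $F(b^*)=E(\tau_0)$, and $\hat b_n$ solves $\hat F_n(\hat b_n)=\hat\tau_{0,n}$ when a solution exists. Both $F$ and $\hat F_n$ are strictly decreasing in $b$, since their derivatives are $-e^{\Lambda(b)}\le -1$ and $-e^{\hat\Lambda_n(b)}\le -1$ (we have $\Lambda\ge0$ and $\hat\Lambda_n\ge0$). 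We define the good event
\[
G_n:=\Bigl\{\sup_{x\in[S,0]}|\hat\Lambda_n(x)-\Lambda(x)|\le\epsilon\Bigr\}\cap\bigl\{|\hat\tau_{0,n}-E(\tau_0)|\le\epsilon\bigr\}.
\]
On $G_n^c$ we use the trivial bound $(\hat b_n-b^*)^2\le S^2$, together with Chebyshev/Markov, Lemma~\ref{sup_order_aux} and Lemma~\ref{ordnungTau}:
\[
P(G_n^c)\le \frac{4\Lambda(S)+\Var(\tau_0)}{\epsilon^2}\,\frac1n .
\]
This produces the first term of \eqref{exact_MSE_prov_ass}.

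On $G_n$ we first show that $\hat b_n$ is a genuine interior solution. The constraint $\epsilon<\min\{E(\tau_0),1,(b^*-S)/(3+2E(\tau_0))\}$ is designed for exactly this. Since $\epsilon<1$, we have $|e^{\hat\Lambda_n}-e^{\Lambda}|\le (e^{\epsilon}-1)e^{\Lambda}\le 2\epsilon e^{\Lambda}$. Hence $\hat F_n(S)\ge F(S)-2\epsilon F(S)$, and this should exceed $E(\tau_0)+\epsilon\ge\hat\tau_{0,n}$. To verify it we use $F(S)-F(b^*)\ge b^*-S$, the fact that $\hat\tau_{0,n}>0$ (because $\epsilon<E(\tau_0)$), and the bound on $\epsilon$ in terms of $(b^*-S)/(3+2E(\tau_0))$. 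Some care is needed to handle $F(S)$ relative to $E(\tau_0)$; I expect a short computation using the structure of the $\epsilon$-constraint. Existence then follows from the intermediate value theorem, so on $G_n$ the defining equation holds with $\hat b_n\in(S,0]$.

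For the main step, take $G_n$ and subtract the two defining equations:
\[
\int_{\hat b_n}^{b^*} e^{\Lambda(y)}\,dy=\hat\tau_{0,n}-E(\tau_0)-\int_{\hat b_n}^0\bigl(e^{\hat\Lambda_n(y)}-e^{\Lambda(y)}\bigr)\,dy .
\]
The left side is at least $|\hat b_n-b^*|$ in absolute value because $e^{\Lambda}\ge1$. This gives
\[
(\hat b_n-b^*)^2\le 2(\hat\tau_{0,n}-E\tau_0)^2+2\Bigl(\int_S^0|e^{\hat\Lambda_n}-e^{\Lambda}|\,dy\Bigr)^2 .
\]
Its expectation contributes $2\Var(\tau_0)/n$ from the first term. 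For the second term we use the mean value theorem on $G_n$ to get $|e^{\hat\Lambda_n}-e^{\Lambda}|\le e^{\Lambda+\epsilon}|\hat\Lambda_n-\Lambda|\le 2e^{\Lambda}|\hat\Lambda_n-\Lambda|$. Cauchy--Schwarz then gives $(\int_S^0\cdot)^2\le 4|S|\int_S^0 e^{2\Lambda(y)}(\hat\Lambda_n-\Lambda)^2\,dy$. Taking expectations with Fubini and Lemma~\ref{ordnungLambda} yields $4|S|\int_S^0 e^{2\Lambda}\Lambda\,dy/n$; the stated constant $8$ leaves room for slack.

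The remaining term $S^2\,4\Lambda(S)(M+1)/n$, with its grid of mesh $1/(2L)$, suggests that the authors bound the bad event more finely. Presumably they handle the ``no solution'' case on a grid $S+m/(2L)$, $m\le M$, of pointwise deviation events, each controlled by Lemma~\ref{ordnungLambda} via Chebyshev, using $|\lambda|\le L$ to interpolate between grid points. If my $G_n$ argument is too lossy in showing existence, I would fall back on this: a union bound over the $M+1$ grid points, each with probability at most $4\Lambda(S)/(\epsilon^2 n)$, times $S^2$.

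The main obstacle is the existence/localization step on $G_n$, namely checking that the $\epsilon$-constraint forces $\hat F_n(S)\ge\hat\tau_{0,n}$. The integral comparison is routine.
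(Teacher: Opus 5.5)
\textbf{The route.} Your single good event $G_n$ is a genuinely leaner decomposition than the paper's. The paper splits into $B_n^c$, $B_n\cap C_n^c$ and $B_n\cap C_n$, with $B_n=\{|\hat b_n|<|S|\}$ and $C_n=\{\sup_y|\hat\Lambda_n(y)-\Lambda(y)|\le 1\}$. The term $S^2\,4\Lambda(S)(M+1)/n$ comes only from bounding $P(C_n^c)$ separately via the grid of mesh $1/(2L)$ (Lemmas \ref{hilfslemma1} and \ref{ordnungAnC}). Since $\epsilon<1$, you have $G_n\subseteq C_n$. Lemma \ref{new_all_in_ball} bounds $P(B_n^c)$ by essentially showing $G_n\subseteq B_n$ and bounding $P(G_n^c)$ exactly as you do. So once existence on $G_n$ is settled, your argument proves the proposition without the $(M+1)$-term at all, and no grid fallback is needed.

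\textbf{The gap.} The existence step, which you leave open, does not go through with the bound you propose. The symmetric estimate $\hat F_n(S)\ge(1-2\epsilon)F(S)$ is too weak. The constraint allows $\epsilon\ge\frac12$ whenever $E(\tau_0)>\frac12$ and $b^*-S>\frac32+E(\tau_0)$; for example $\lambda\equiv1$, $S=-4$ admits $\epsilon=0.6$. In that case the bound is $\le 0$ and says nothing.

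What is needed is the one-sided multiplicative bound $\hat\Lambda_n\ge\Lambda-\epsilon$. Writing $\delta=b^*-S$, it gives $\hat F_n(S)\ge e^{-\epsilon}F(S)\ge e^{-\epsilon}\bigl(E(\tau_0)+\delta\bigr)$, because $e^{\Lambda}\ge1$ on $[S,b^*]$. Combined with $\hat\tau_{0,n}\le E(\tau_0)+\epsilon$, it suffices to show $e^{\epsilon}(E(\tau_0)+\epsilon)<E(\tau_0)+\delta$. Using $e^{\epsilon}\le1+\epsilon+\epsilon^2\le1+2\epsilon$ on $[0,1]$ together with $\epsilon^2\le\epsilon$, you get $e^{\epsilon}(E(\tau_0)+\epsilon)\le E(\tau_0)+(3+2E(\tau_0))\epsilon<E(\tau_0)+\delta$. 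This is exactly where the constant $3+2E(\tau_0)$ in the constraint comes from. Since $\hat F_n(0)=0<\hat\tau_{0,n}$ and $\hat F_n$ is continuous and strictly decreasing, the intermediate value theorem then gives a unique root in $(S,0)$.

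\textbf{Two smaller slips in the main step.}
\begin{itemize}
\item The inequality $e^{\Lambda+\epsilon}\le 2e^{\Lambda}$ fails for $\epsilon\in(\ln 2,1)$. Use instead $|e^{x}-1|\le(e-1)|x|\le 2|x|$ for $|x|\le1$, as the paper does.
\item The constant $8$ has no slack. After $(a+b)^2\le 2a^2+2b^2$, the integral term contributes $2\cdot 4|S|\int_S^0 e^{2\Lambda(y)}\Lambda(y)\,dy/n$, which is exactly the stated term. Your cruder mean-value constant $e^{\epsilon}$, which can be close to $e$, would give $2e^2$ in place of $8$ and overshoot the bound.
\end{itemize}
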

\begin{proof}
Fix $n \in \N$ and suppose first that we are on the event \(B_n:=\{ |S| > |\hat{b}_n|\}\). 

Note that \(\hat{b}_n\) is determined in such a way that
\[\int_{\hat{b}_n}^0 e^{\hat{\Lambda}_n(y)} \, dy =\hat{\tau}_{0,n}.\]
Furthermore, the optimal \(b^*\) and the unknown, true intensity function \(\lambda\) satisfy \[\int_{b^*}^0 e^{\Lambda(y)} \, dy = E(\tau_0).\]
Therefore
\begin{align*}
\int_{b^*}^0 e^{\Lambda(y)} \, dy - \int_{\hat{b}_n}^0 e^{\hat{\Lambda}_n(y)} \, dy= E(\tau_0) - \hat{\tau}_{0,n}.
\end{align*}
This is equivalent to
\begin{align}
\label{eq1schätzer}
\int_{b^*}^0 e^{\Lambda(y)} \, dy - \int_{\hat{b}_n}^0 e^{\Lambda(y)} \, dy + \int_{\hat{b}_n}^0 \left(e^{\Lambda(y)}-e^{\hat{\Lambda}_n(y)}\right) \, dy= E(\tau_0) - \hat{\tau}_{0,n}.
\end{align}
\underline{\(1^{st}\) case:} Let \(\hat{b}_n \ge b^*\). It follows from \eqref{eq1schätzer}:
\begin{align}
\label{eq2schätzer}
\int_{b^*}^{\hat{b}_n} e^{\Lambda(y)} \, dy &= E(\tau_0) - \hat{\tau}_{0,n} - \int_{\hat{b}_n}^0 \left(e^{\Lambda(y)}-e^{\hat{\Lambda}_n(y)}\right) \, dy \notag\\
&= E(\tau_0) - \hat{\tau}_{0,n} + \int_{\hat{b}_n}^0 e^{\Lambda(y)}\left(e^{\hat{\Lambda}_n(y)-\Lambda(y)}-1\right) \, dy.
\end{align}
In the following, we estimate the left-hand side of equation \eqref{eq2schätzer} from below and the right-hand side from above. We define for this 
\begin{align*}
A&:=\int_{b^*}^{\hat{b}_n} e^{\Lambda(y)} \, dy,\\
B&:=E(\tau_0) - \hat{\tau}_{0,n} + \int_{\hat{b}_n}^0 e^{\Lambda(y)}\left(e^{\hat{\Lambda}_n(y)-\Lambda(y)}-1\right) \, dy.
\end{align*}
Note that
\[A \ge (\hat{b}_n-b^*) \min_{y\in [S,0]} e^{\Lambda(y)} \ge \hat{b}_n-b^*;\]
\[B \le  \left |E(\tau_0) - \hat{\tau}_{0,n}\right| + \int_{S}^0 e^{\Lambda(y)}\left|e^{\hat{\Lambda}_n(y)-\Lambda(y)}-1\right| \, dy.\]
Observe that for all \(x\in [-1,1]\) we have
\begin{align}
\label{K_MSE}
\left|e^x-1\right| \le 2 |x|.
\end{align}

Define \(C_n := \left\{\left|\hat{\Lambda}_n(y)-\Lambda(y)\right| \le 1, \forall y \in [S,0]\right \}.\)\\
Then on \(C_n\) we have \[\left|e^{\hat{\Lambda}_n(y)-\Lambda(y)}-1\right|\le 2\left|\hat{\Lambda}_n(y)-\Lambda(y)\right|.\]
In particular, using the estimates of A and B, we get
\begin{align*}
\hat{b}_n-b^* \le  \left |E(\tau_0) - \hat{\tau}_{0,n}\right| + 2 \int_S^0 e^{\Lambda(y)} \left|\hat{\Lambda}_n(y)-\Lambda(y)\right|\, dy.
\end{align*}
\underline{\(2^{nd}\) case:} Let \(\hat{b}_n < b^*\). Then \eqref{eq1schätzer} implies
\begin{align*}
-\int_{\hat{b}_n}^{b^*} e^{\Lambda(y)} \, dy = E(\tau_0) - \hat{\tau}_{0,n} + \int_{\hat{b}_n}^0 e^{\Lambda(y)}\left(e^{\hat{\Lambda}_n(y)-\Lambda(y)}-1\right) \, dy,
\end{align*}
which is equivalent to 
\begin{align}
\label{eq2schätzercase2}
\int_{\hat{b}_n}^{b^*} e^{\Lambda(y)} \, dy =(\hat{\tau}_{0,n} - E(\tau_0)) + \int_{\hat{b}_n}^0 e^{\Lambda(y)}\left(-\left(e^{\hat{\Lambda}_n(y)-\Lambda(y)}-1\right)\right) \, dy.
\end{align}
In the following, we estimate the left-hand side of equation \eqref{eq2schätzercase2} from below and the right-hand side from above. We define for this 
\begin{align*}
A&:=\int_{\hat{b}_n}^{b^*} e^{\Lambda(y)} \, dy,\\
B&:=(\hat{\tau}_{0,n} - E(\tau_0)) + \int_{\hat{b}_n}^0 e^{\Lambda(y)}\left(-\left(e^{\hat{\Lambda}_n(y)-\Lambda(y)}-1\right)\right) \, dy.
\end{align*}
Then
\[A \ge (b^*-\hat{b}_n) \min_{y\in [S,0]} e^{\Lambda(y)}\ge b^*-\hat{b}_n;\]
\[B \le  \left |E(\tau_0) - \hat{\tau}_{0,n}\right| + \int_{S}^0 e^{\Lambda(y)}\left|e^{\hat{\Lambda}_n(y)-\Lambda(y)}-1\right| \, dy.\]
Let \(C_n\) be as in the first case. Then on \(C_n\) we have \[\left|e^{\hat{\Lambda}_n(y)-\Lambda(y)}-1\right|\le 2\left|\hat{\Lambda}_n(y)-\Lambda(y)\right|.\]
Therefore applies
\begin{align*}
b^* - \hat{b}_n \le \left |E(\tau_0) - \hat{\tau}_{0,n}\right| + 2 \int_S^0 e^{\Lambda(y)} \left|\hat{\Lambda}_n(y)-\Lambda(y)\right|\, dy.
\end{align*}
To sum up, in any case, if $B_n$ and \(C_n\) occur, we have
\begin{align}
\label{firstequationMSE}
\left| \hat{b}_n-b^*\right| \le \left |E(\tau_0) - \hat{\tau}_{0,n}\right| + 2 \int_S^0 e^{\Lambda(y)} \left|\hat{\Lambda}_n(y)-\Lambda(y)\right|\, dy.
\end{align}
Thus, using the estimate \((e+f)^2 \le 2(e^2+f^2), \, e, f \in \mathbb{R}\)
\begin{align}
\label{secondequationMSE}
E\left(\left|\hat{b}_n-b^*\right|^2 \mathbf{1}_{B_n} \mathbf{1}_{C_n} \right) \le \,& 2 \Big \{ E\left[\left(E(\tau_0) - \hat{\tau}_{0,n}\right)^2 \mathbf{1}_{B_n} \mathbf{1}_{C_n}\right] +\\
&\quad + 4 E\left[\mathbf{1}_{B_n} \mathbf{1}_{C_n} \left(\int_S^0e^{\Lambda(y)} \left|\hat{\Lambda}_n(y)-\Lambda(y)\right|\, dy\right)^2\right]\Big\}. \notag \\
&\le 2 \Big \{ E\left[\left(E(\tau_0) - \hat{\tau}_{0,n}\right)^2 \right] + \notag\\
& \quad+ 4 E\left[\left(\int_S^0e^{\Lambda(y)} \left|\hat{\Lambda}_n(y)-\Lambda(y)\right|\, dy\right)^2\right]\Big\}. \notag
\end{align}
Using Jensen's inequality for integrals we further obtain
\begin{align*}
& E\left(\left|\hat{b}_n-b^*\right|^2 \mathbf{1}_{B_n} \mathbf{1}_{C_n} \right) \\
&\le 2\Big \{ E\left[\left(E(\tau_0) - \hat{\tau}_{0,n}\right)^2\right] + 4 E\left[|S| \left(\int_S^0 \left( e^{\Lambda(y)} \left|\hat{\Lambda}_n(y)-\Lambda(y)\right| \right)^2 \, dy\right)\right]\Big\}\\
&= 2 \Big \{ \MSE(\hat{\tau}_{0,n}) + 4 |S| \int_S^0 e^{2\Lambda(y)} \MSE(\hat{\Lambda}_n(y))\, dy \Big\}.\\
\end{align*}
With Lemma \ref{ordnungLambda} and \ref{ordnungTau} we further obtain that
\begin{align}
\label{D_MSE}
E\left(\left|\hat{b}_n-b^*\right|^2 \mathbf{1}_{B_n} \mathbf{1}_{C_n} \right) \le  \frac1n \left(2 \Var(\tau_0) + 8 |S| \int_S^0 e^{2\Lambda(y)} \Lambda(y)\, dy )\right).
\end{align}
In summary, we have
\begin{align}
\label{EventBnfromMSE}
& E((\hat{b}_n-b^*)^2) \nonumber\\
&=E((\hat{b}_n-b^*)^2 \mathbf{1}_{B_n^c}) + E((\hat{b}_n-b^*)^2 \mathbf{1}_{B_n} \mathbf{1}_{C_n^c})+E((\hat{b}_n-b^*)^2 \mathbf{1}_{B_n} \mathbf{1}_{C_n}) \notag \\
&\le S^2 P(B_n^c) + S^2 P(C_n^c)+ \frac1n \left(2 \Var(\tau_0) + 8 |S| \int_S^0 e^{2\Lambda(y)} \Lambda(y)\, dy )\right). 
\end{align}
Due to Lemma \ref{ordnungTau}, \ref{sup_order_aux} and Lemma \ref{new_all_in_ball} below we have
\begin{align}
\label{estimateB}
    P(B_n^c) &\le \frac{E(||\hat{\Lambda}_n-\Lambda||^2) + \Var(\hat \tau_{0,n})}{\epsilon^2} \notag \\
    &\le  \frac{4 \Lambda(S) + \Var(\tau_0)}{\epsilon^2} \frac1n
\end{align}
where \(0<\epsilon < \min\{E(\tau_0),1,\frac{b^*-S}{3+2 E(\tau_0)}\}.\)

Furthermore, with the Estimate \eqref{PAnC_estimate} from Lemma \ref{ordnungAnC} below, we have
\begin{align}
\label{MSE_E}
P(C_n^c) \le \left( 4 \Lambda(S) (M+1)\right)\frac1n. 
\end{align}
By combining the estimates \eqref{estimateB}, \eqref{MSE_E} and \eqref{EventBnfromMSE} we get \eqref{MSEb}.
\end{proof} \noindent

\begin{lemma}
\label{new_all_in_ball}
Let \(B_n:=\{ |S| > |\hat{b}_n|\}, n\in \mathbb{N}\). Then we have \[P(B_n^c)\le \frac{E(||\hat{\Lambda}_n-\Lambda||^2_\infty) + \Var(\hat \tau_{0,n})}{\epsilon^2},\]
where \(0<\epsilon < \min\{E(\tau_0),1,\frac{b^*-S}{3+2 E(\tau_0)}\}.\)
\end{lemma}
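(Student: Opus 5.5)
Our plan is to show that the event $B_n^c=\{\hat b_n=S\}$ forces at least one of the two estimators to deviate by at least $\epsilon$, and then to apply Chebyshev/Markov. Define the good event
\[
G:=\Bigl\{\|\hat{\Lambda}_n-\Lambda\|_\infty<\epsilon\Bigr\}\cap\Bigl\{|\hat\tau_{0,n}-E(\tau_0)|<\epsilon\Bigr\}.
\]
The key claim is that $G\subseteq B_n$. Granting it, a union bound and Markov's inequality applied to the squares give
\[
P(B_n^c)\le P(\|\hat\Lambda_n-\Lambda\|_\infty\ge\epsilon)+P(|\hat\tau_{0,n}-E\tau_0|\ge\epsilon)\le \frac{E(\|\hat\Lambda_n-\Lambda\|_\infty^2)+\Var(\hat\tau_{0,n})}{\epsilon^2},
\]
where we use that $\hat\tau_{0,n}$ is unbiased by Lemma \ref{ordnungTau}. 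This is exactly the claimed bound.

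To prove $G\subseteq B_n$, work on $G$ with the function $F(b):=\int_b^0 e^{\hat\Lambda_n(y)}\,dy$, $b\in[S,0]$. It is continuous and strictly decreasing, with $F(0)=0<E(\tau_0)-\epsilon<\hat\tau_{0,n}$; this uses $\epsilon<E(\tau_0)$. By the intermediate value theorem, it therefore suffices to show $F(S)>\hat\tau_{0,n}$. Then a solution $\hat b_n\in(S,0)$ of \eqref{determinebhat} exists, and it is unique by strict monotonicity, so $|\hat b_n|<|S|$.

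The main step is the lower bound on $F(S)$. Using $\hat\Lambda_n\ge\Lambda-\epsilon$ and $e^{-\epsilon}\ge 1-\epsilon$, together with \eqref{criteria_optimal}, we get
\[
F(S)\ge e^{-\epsilon}\int_S^0 e^{\Lambda}\,dy \ge (1-\epsilon)\Bigl(E(\tau_0)+\int_S^{b^*}e^{\Lambda}\,dy\Bigr)\ge (1-\epsilon)\bigl(E(\tau_0)+(b^*-S)\bigr),
\]
where the last step uses $\Lambda\ge0$. We need this to exceed $E(\tau_0)+\epsilon$. It suffices that
\[
(1-\epsilon)(b^*-S)>\epsilon\,(1+E(\tau_0)).
\]
This follows from $\epsilon(3+2E(\tau_0))<b^*-S$ and $\epsilon<1$. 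Indeed, since $\epsilon<1$, we have $\epsilon(1+E\tau_0)+\epsilon(b^*-S)<\epsilon(1+E\tau_0)+(b^*-S)$, and one checks that the stated constraint leaves enough room. If needed, we will use $b^*-S>3\epsilon$, which gives $(1-\epsilon)(b^*-S)\ge b^*-S-\epsilon(b^*-S)$, and bound $b^*-S\le|S|$ accordingly.

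We expect the only delicate point to be this last arithmetic with the specific constant $3+2E(\tau_0)$. If the crude bound $e^{-\epsilon}\ge1-\epsilon$ is insufficient, we will instead keep $\int_S^{b^*}e^{\Lambda}\ge b^*-S$ separate and use $e^{-\epsilon}\ge 1-\epsilon$ only on that piece and on $E(\tau_0)$. The finiteness of $E(\tau_0)$ and its positivity, which come from Lemma \ref{Var_schranke}, together with the positivity of $b^*-S$ from Lemma \ref{11Jan2026}, guarantee that an admissible $\epsilon$ exists.
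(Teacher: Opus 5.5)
Your structure matches the paper's: show that $\{\|\hat\Lambda_n-\Lambda\|_\infty<\epsilon\}\cap\{|\hat\tau_{0,n}-E(\tau_0)|<\epsilon\}\subseteq B_n$, then apply a union bound and Chebyshev. Your intermediate-value framing, reducing to $F(S)>\hat\tau_{0,n}$, is sound. It even makes explicit the monotonicity that the paper's contradiction argument uses tacitly.

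The gap is the final arithmetic. You assert that ``one checks that the stated constraint leaves enough room'', but this is false. With $e^{-\epsilon}\ge 1-\epsilon$ you need $(1-\epsilon)(b^*-S)>\epsilon(1+E(\tau_0))$. This does \emph{not} follow from $\epsilon<1$ and $\epsilon(3+2E(\tau_0))<b^*-S$. It fails whenever $\epsilon>(2+E(\tau_0))/(3+2E(\tau_0))$ and $b^*-S$ is only slightly above $\epsilon(3+2E(\tau_0))$.

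Here is a concrete case. Take $\lambda\equiv 1$, so $E(\tau_0)=1$ and $b^*=-\ln 2$, and take $S=-5.3$, so $b^*-S\approx 4.61$. Then $\epsilon=0.9$ is admissible, since the minimum in the constraint is about $0.92$. But $(1-\epsilon)(E(\tau_0)+b^*-S)\approx 0.56<1.9=E(\tau_0)+\epsilon$, so your lower bound on $F(S)$ is too weak.

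Your fallbacks do not repair this. Splitting off $\int_S^{b^*}e^{\Lambda}$ gives the same bound. The estimate $b^*-S\le|S|$ is an upper bound on $b^*-S$, whereas you need a lower one. The root problem is that $1-\epsilon$ degenerates as $\epsilon\to 1$, while the lemma allows $\epsilon$ arbitrarily close to $1$.

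The fix is what the paper does, and it explains the constant $3+2E(\tau_0)$. Bound $e^{\epsilon}$ from above instead of $e^{-\epsilon}$ from below. For $\epsilon\in[0,1]$ one has $e^{\epsilon}\le 1+\epsilon+\epsilon^2\le 1+2\epsilon$, hence
\[
e^{\epsilon}\bigl(E(\tau_0)+\epsilon\bigr)\le E(\tau_0)+\epsilon+2\epsilon E(\tau_0)+2\epsilon^2\le E(\tau_0)+\bigl(3+2E(\tau_0)\bigr)\epsilon<E(\tau_0)+(b^*-S).
\]
This is exactly $e^{-\epsilon}\bigl(E(\tau_0)+b^*-S\bigr)>E(\tau_0)+\epsilon$, which gives $F(S)>\hat\tau_{0,n}$ on your good event. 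With this replacement the rest of your argument goes through unchanged.
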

\begin{proof}
We want to determine \(\epsilon:=\epsilon(\lambda) >0\) such that \(||\hat{\Lambda}_n-\Lambda||_\infty < \epsilon\) and \(|E(\tau_0)-\hat{\tau}_{0,n}| < \epsilon\) implies \(|\hat{b}_n| < |S|\). Let \(\delta := b^* - S\). \\

We show via contradiction that \(\epsilon < \min\{1,\frac{\delta}{3+2 E(\tau_0)}\}\) implies \(|\hat{b}_n| < |S|\). \\

Assume \(|\hat{b}_n| = |S|\). We have 
\begin{align*}
\int_{\hat{b}_n}^0 e^{\hat{\Lambda}_n(u)} du &\ge \int_{\hat{b}_n}^0 e^{\Lambda(u)-\epsilon} du = \int_S^0  e^{\Lambda(u)-\epsilon} du = e^{-\epsilon}\int_{b^*-\delta}^0  e^{\Lambda(u)} du \\
&=e^{-\epsilon}\left(\int_{b^*}^0  e^{\Lambda(u)} du + \int_{b^*-\delta}^{b^*}  e^{\Lambda(u)} du\right) = e^{-\epsilon}\left(E(\tau_0) + \int_{b^*-\delta}^{b^*}  e^{\Lambda(u)} du\right) \\
&\ge e^{-\epsilon}\left(E(\tau_0) + \delta \min_{u \in [S,0]} e^{\Lambda(u)} \right) = e^{-\epsilon}\left(E(\tau_0) + \delta \right).
\end{align*}
Moreover 
\begin{align*}
E(\tau_0) + \epsilon &\ge \hat{\tau}_{0,n} \ge \int_{\hat{b}_n}^0 e^{\hat{\Lambda}_n(u)} du \ge e^{-\epsilon}\left(E(\tau_0) + \delta \right).
\end{align*}
Therefore we have a contradiction if \( e^{\epsilon}(E(\tau_0) + \epsilon) < \left(E(\tau_0) + \delta \right)\). Note that for \(0 \le \epsilon \le 1\) it holds that \(e^\epsilon \le 1+\epsilon+\epsilon^2\). It follows for \(\epsilon \in [0,1]\)
\begin{align*}
e^{\epsilon}(E(\tau_0) + \epsilon)&\le (1+\epsilon+\epsilon^2)(E(\tau_0) + \epsilon) \le (1+2\epsilon)(E(\tau_0) + \epsilon) \\
&\le E(\tau_0) +\epsilon+ 2 \epsilon E(\tau_0) + 2 \epsilon ^2 \\
&\le E(\tau_0) +(3+ 2 E(\tau_0)) \epsilon.
\end{align*} 
With \(\epsilon < \min\{1,\frac{\delta}{3+2 E(\tau_0)}\}\) we have \(e^{\epsilon}(E(\tau_0) + \epsilon) \le E(\tau_0) +(3+ 2 E(\tau_0)) \epsilon < E(\tau_0) + \delta\) and therefore a contradiction.

Note that \(E(\tau_0) - \epsilon \ge 0\) and therefore we have in the end \(\epsilon < \min\{E(\tau_0),1,\frac{\delta}{3+2 E(\tau_0)}\}.\)
Finally, 
\begin{align*}
P(B_n^c)\le P(||\hat{\Lambda}_n-\Lambda|| > \epsilon) + P(|E(\tau_0)-\hat{\tau}_{0,n}| > \epsilon) \le \frac{E(||\hat{\Lambda}_n-\Lambda||^2_\infty) + \Var(\hat \tau_{0,n})}{\epsilon^2}
\end{align*}
\end{proof}
In the following let \(\tilde{C}_n(y,r) := \left\{\left|\hat{\Lambda}_n(y)- \Lambda(y)\right| \le r\right\} \, , r \in \mathbb{R}, n \in \mathbb{N}\). In particular,\[C_n=\bigcap_{y\in [S,0]} \tilde{C}_n(y,1),\]
where the event $C_n$ is defined as in the proof of Proposition \ref{MSEb}.    
\begin{lemma}
\label{hilfslemma1}
Let \( y \in [S,0]\) and define \(k:= \frac{1}{2L}\). Then for all \(t \in [y,y+k \wedge 0]\) and \(\omega \in \tilde{C}_n (y,\frac{1}{2}) \cap \tilde{C}_n (y+k,\frac{1}{2}):\)\[\left|\hat{\Lambda}_n(t)-\Lambda(t)\right| \le 1.\]
\end{lemma}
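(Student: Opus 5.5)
We use monotonicity of both functions in $t$. For each $i$, the quantity $N^i_0-N^i_t$ is nonincreasing in $t$, so $\hat{\Lambda}_n$ is nonincreasing on $[S,0]$. The true $\Lambda(t)=\int_t^0\lambda(u)\,du$ is also nonincreasing. Write $t^+:=(y+k)\wedge 0$. If $y+k>0$, then $t^+=0$ and $\hat\Lambda_n(0)=\Lambda(0)=0$, so at $t^+$ the event condition holds trivially. The same bound as in the main case then follows; there we may use $\tilde C_n(y,\frac12)$ together with the exact equality at $0$. So in all cases we have $|\hat\Lambda_n(y)-\Lambda(y)|\le\frac12$ and $|\hat\Lambda_n(t^+)-\Lambda(t^+)|\le\frac12$.

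Fix $t\in[y,t^+]$. By monotonicity,
\begin{align*}
\hat\Lambda_n(t)-\Lambda(t) &\le \hat\Lambda_n(y)-\Lambda(t^+) = \bigl(\hat\Lambda_n(y)-\Lambda(y)\bigr)+\bigl(\Lambda(y)-\Lambda(t^+)\bigr),\\
\hat\Lambda_n(t)-\Lambda(t) &\ge \hat\Lambda_n(t^+)-\Lambda(y) = \bigl(\hat\Lambda_n(t^+)-\Lambda(t^+)\bigr)-\bigl(\Lambda(y)-\Lambda(t^+)\bigr).
\end{align*}
The key estimate is that the true integrated intensity changes little over a window of length $k$. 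Since $\lambda\le L$ for $\lambda\in\cM(L)$, we get $\Lambda(y)-\Lambda(t^+)=\int_y^{t^+}\lambda(u)\,du\le Lk=\frac12$. Plugging this into the two inequalities gives $\hat\Lambda_n(t)-\Lambda(t)\le \frac12+\frac12=1$ and $\hat\Lambda_n(t)-\Lambda(t)\ge -\frac12-\frac12=-1$, which is the claim.

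The main point to be careful about is the direction of monotonicity and the endpoint conventions. We need to check that $N^i$ is nondecreasing, so that $N^i_0-N^i_t$ decreases in $t$. We also need to check that the event at $y+k$ is interpreted at $(y+k)\wedge 0$, or that $\hat\Lambda_n$ and $\Lambda$ are extended by $0$ beyond $0$ so that the statement still makes sense. Beyond this, the argument is just the sandwich above, and the choice $k=1/(2L)$ is exactly what makes the drift of $\Lambda$ over the window at most $\frac12$.
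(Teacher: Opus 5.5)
Your proof is correct and follows essentially the same route as the paper. Monotonicity of $\hat\Lambda_n$ and $\Lambda$ sandwiches the error at $t$ between the errors at $y$ and $(y+k)\wedge 0$, and $\lambda\le L$ with $k=\frac{1}{2L}$ keeps the drift of $\Lambda$ over the window at most $\frac12$. Your remark that $\hat\Lambda_n(0)=\Lambda(0)=0$ handles the boundary case $y+k>0$ slightly more cleanly than the paper's ``replace $y+k$ by $0$''.
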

\begin{proof}
W.l.o.g. let \(y+k < 0\). If \(y+k \ge 0\), then replace \(y+k\) by 0 in the following considerations and the estimates are still valid.\\
Let \(\omega \in \tilde{C}_n (y,\frac{1}{2}) \cap \tilde{C}_n (y+k,\frac{1}{2})\). It follows that
\(\left|\hat{\Lambda}_n(y,\omega)-\Lambda(y)\right| \le \frac{1}{2}\) and \(\left|\hat{\Lambda}_n(y+k,\omega)-\Lambda(y+k)\right| \le \frac{1}{2}\). Moreover, it follows directly from the definition that \(\hat{\Lambda}_n(y_1,\omega)\ge \hat{\Lambda}_n(y_2,\omega)\) if \(y_1\le y_2\) and \(y_1,y_2 \in [S,0]\). Therefore, the following applies for any \(t \in [y,y+k]\): 
\begin{align*}
\hat{\Lambda}_n(t,\omega)-\Lambda(t)&\le \hat{\Lambda}_n(y,\omega)+\Lambda(y)-\Lambda(y)-\Lambda(t)\\
&\le \frac{1}{2} + L (t-y) \\
&\le \frac{1}{2} + L k = 1. 
\end{align*}
Furthermore 
\begin{align*}
\Lambda(t)-\hat{\Lambda}_n(t,\omega)&\le \Lambda(t)-\Lambda(y+k)+\Lambda(y+k)- \hat{\Lambda}_n(y+k,\omega)\\
&\le L k + \frac{1}{2}= 1. 
\end{align*}
\end{proof}
\begin{lemma}
\label{ordnungAnC}
Let \(M:=\min\{m\in \mathbb{N}_0: S+m\frac{1}{2L} \ge 0\}\). Then 
\begin{align}
\label{PAnC_estimate}
P(C_n^c) \le 4 \Lambda(S) (M+1) \frac1n,
\end{align}
where \(C_n = \left\{\left|\hat{\Lambda}_n(y)-\Lambda(y)\right| \le 1, \forall y \in [S,0]\right \}.\)
\end{lemma}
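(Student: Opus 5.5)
The plan is to discretize $[S,0]$ with the grid $y_m := (S + m k)\wedge 0$, $m=0,\dots,M$, where $k=\frac{1}{2L}$ as in Lemma \ref{hilfslemma1}. By the choice of $M$ we have $y_M = 0$, and consecutive grid points are at distance at most $k$. Every $t\in[S,0]$ then lies in some interval $[y_m, y_{m}+k\wedge 0]$ with $m\in\{0,\dots,M-1\}$. By Lemma \ref{hilfslemma1}, on the event $\tilde C_n(y_m,\tfrac12)\cap\tilde C_n(y_{m+1},\tfrac12)$ we have $|\hat\Lambda_n(t)-\Lambda(t)|\le 1$ for all such $t$. (Lemma \ref{hilfslemma1} is applied at $y=y_m$, with $y+k\wedge 0 = y_{m+1}$.) Hence
\[
\bigcap_{m=0}^{M}\tilde C_n\bigl(y_m,\tfrac12\bigr)\subseteq C_n,
\qquad\text{so}\qquad
P(C_n^c)\le \sum_{m=0}^{M} P\Bigl(\bigl|\hat\Lambda_n(y_m)-\Lambda(y_m)\bigr|>\tfrac12\Bigr).
\]

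Each summand is then bounded by Chebyshev's inequality together with Lemma \ref{ordnungLambda}:
\[
P\Bigl(\bigl|\hat\Lambda_n(y_m)-\Lambda(y_m)\bigr|>\tfrac12\Bigr)\le 4\,\MSE\bigl(\hat\Lambda_n(y_m)\bigr)=\frac{4\Lambda(y_m)}{n}\le \frac{4\Lambda(S)}{n}.
\]
The last step uses that $\Lambda$ is nonincreasing on $[S,0]$, since $\lambda>0$. Summing the $M+1$ terms gives $P(C_n^c)\le 4\Lambda(S)(M+1)\frac1n$.

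An alternative would be to bound $P(C_n^c)$ directly by the Doob estimate of Lemma \ref{sup_order_aux}, which would even remove the factor $M+1$. However, the grid argument matches the stated constant, so that is the route I would take.

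The only delicate point is the boundary bookkeeping. The last interval may be truncated at $0$, which is why Lemma \ref{hilfslemma1} is stated with $y+k\wedge 0$. One also has to check that the grid indexed by $0,\dots,M$ really covers $[S,0]$, i.e. that $S+Mk\ge 0$, which holds by the definition of $M$. Beyond that the proof is routine.
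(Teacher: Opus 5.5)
Your proposal is correct and follows the paper's proof essentially verbatim. It uses the same grid $y_m = S + m\frac{1}{2L}$ with $y_M=0$, Lemma~\ref{hilfslemma1} to get $\bigcap_{m=0}^{M}\tilde C_n(y_m,\tfrac12)\subseteq C_n$, and then a union bound, Chebyshev with Lemma~\ref{ordnungLambda}, and $\Lambda(y_m)\le\Lambda(S)$. Your side remark is also right: Lemma~\ref{sup_order_aux} combined with Markov's inequality would give $P(C_n^c)\le 4\Lambda(S)\frac1n$ directly, without the factor $M+1$.
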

\begin{proof}
We have \(M=\min\{m\in \mathbb{N}_0: S+m \frac{1}{2L} \ge 0\}\). Define \(y_m := S+m\frac{1}{2L}\) with \( m\in \{0,1,\dots,M-1\}\). Furthermore let \(y_M:=0\). According to Lemma \ref{hilfslemma1} it holds for all \(t \in [S,0]\) and \(\omega \in \tilde{C}_n(y_0, \frac{1}{2}) \cap \tilde{C}_n(y_1, \frac{1}{2}) \cap \dots \cap \tilde{C}_n(y_M, \frac{1}{2})\): \[\left|\hat{\Lambda}_n(t)-\Lambda(t)\right| \le 1.\]
In particular, the following applies:
\begin{align*}
\tilde{C}_n\Big(y_0, \frac{1}{2}\Big) \cap \dots \cap \tilde{C}_n\Big(y_M, \frac{1}{2}\Big) & \subseteq   \left\{\left|\hat{\Lambda}_n(y)-\Lambda(y)\right| \le 1, \forall y \in [S,0]\right\}\\
&= C_n.
\end{align*}
Therefore, using \(C_n^c \subseteq \left(\tilde{C}_n(y_0, \frac{1}{2}) \cap \tilde{C}_n(y_1, \frac{1}{2}) \cap \dots \cap \tilde{C}_n(y_M, \frac{1}{2})\right)^c\) and the subadditivity of probability measures, we get
\begin{align*} 
P(C_n^c) &\le  P\left(\left(\tilde{C}_n\Big(y_0, \frac{1}{2}\Big) \cap \tilde{C}_n\Big(y_1, \frac{1}{2}\Big) \cap \dots \cap \tilde{C}_n\Big(y_M, \frac{1}{2}\Big)\right)^c\right) \notag \\
&\le \sum_{i=0}^M P\left(\tilde{C}_n\Big(y_i,\frac{1}{2}\Big)^c\right) \notag \\
&= \sum_{i=0}^M P\left(\left\{\left|\hat{\Lambda}_n(y_i)- \Lambda(y_i)\right| > \frac{1}{2}\right\}\right) \notag \\
&\le \sum_{i=0}^M \frac{1}{(1/2)^2} \Var(\hat{\Lambda}_n(y_i)) \notag \\
&= \sum_{i=0}^M 4 \Lambda(y_i)\frac1n \notag \\
&\le 4 \Lambda(S) (M+1) \frac1n.
\end{align*}
\end{proof}

\section{Proof of Theorem \ref{main_res_1}}\label{sec proof main res 1}
Let $(\pi_n)$ be the sequence of thresholds chosen by the ILU algorithm. Recall that $(\pi_n)$ is a policy in the sense of Definition \ref{defi policy}. We need to show that there exists a constant $C$ such that for all $\lambda \in \cM(L)$ and $T \in \N$ we have $\cR^\pi_\lambda(T) \le C \ln(T+1)$.  

We estimate the expected optimality gaps $E_\lambda[\Delta_\lambda(\pi_n)]$ by using the optimality gaps entailed by the sequence $(\hat b_n)$. To this end we define for \(n\ge 0\)
\begin{align*}
r_\lambda(n):= E_\lambda[ \Delta_\lambda(\hat b_n)].    
\end{align*}
Next we define a sequence $\sigma(j)$, $j \ge 0$, recursively as follows. We set $\sigma(1) = 0$ and for all $j \ge 1$
\begin{align*}
\sigma(j+1) = \min\{n > \sigma(j): \tau^n_{\pi_n} > 0\}. 
\end{align*}
Notice that $\sigma(j)$ represents the round where for the $j$-th time we have full information. Moreover, \(E_\lambda(\sigma(j+1)-\sigma (j))\) is exactly the expected waiting time until a round with full information. The expected value is bounded from above by \(E_\lambda(\sigma(j+1)-\sigma (j)) \le e^{L |S|}\).
Finally, let \(\mathcal{I}_n\) denote the random set of 
rounds with full information during the first n rounds of search. We have $\cI_0 = \emptyset$ and $\cI_n = \{\sigma(j): \sigma(j) \le n-1\}$ for $n \ge 1$. Moreover, note that $|\cI_n| = \max\{k: \sigma(k) \le n-1\}$ and $\sigma(|\cI_n|) < n \le \sigma(|\cI_n|+1)$.

The crucial observation now is that for all $n \ge 0$ we have 
\begin{align}\label{reduction}
    E_\lambda[\Delta_\lambda(\pi_n)] = E_\lambda[r_\lambda(|\cI_n|)]. 
\end{align}
Note that the equation for \(n = 0\) is trivial, since \(E_\lambda[\Delta_\lambda(\pi_0)]=E_\lambda[\Delta_\lambda(0)]=E_\lambda[\Delta_\lambda(\hat{b}_0)]= r_\lambda(0)\). In order to prove the equation for \(n\ge 1\), we can assume that $(\hat b_n)$ is independent of $(\pi_n)$ and $(\cI_n)$ (define $(\hat b_n)$ on an independent copy of the probability space). Then on the event $\{|\cI_n| = i\}$ the threshold $\pi_n$ has the same distribution as $\hat b_i$. Hence $E_\lambda[\Delta_\lambda(\pi_n) 1_{\{|\cI_n| = i\}}] = E_\lambda[\Delta_\lambda(\hat b_i) 1_{\{|\cI_n| = i\}}] = E_\lambda[\Delta_\lambda(\hat b_i)] P_\lambda(|\cI_n| = i) = r_\lambda(i) P_\lambda(|\cI_n| = i)$. Consequently, 
 \begin{align*}
 E_\lambda[\Delta_\lambda(\pi_n)] 
 & =\sum_{i=0}^{n-1}    r_\lambda(i)P_\lambda(|\cI_n| = i) = E_\lambda[r_\lambda(|\cI_n|)].  
\end{align*}
From \eqref{reduction} we get
\begin{align*}
    \cR^\pi_\lambda(T) &= \sum_{n=0}^{T}E_\lambda\left[ r_\lambda(|\mathcal{I}_n|)\right].
\end{align*}
Now we use that $\lambda$ belongs to the environment class $\cM(L)$. This implies that $\lambda$ is continuously differentiable and hence $\Delta_\lambda$ twice continuously differentiable (see Lemma \ref{suff condi Delta smooth}). Therefore, using $\Delta_\lambda(b^*) = \Delta_\lambda'(b^*) = 0$, Taylor's theorem implies 
\begin{align}
\Delta_\lambda(b) = \frac12 \Delta_\lambda''(\theta) (b- b^*)^2  
\end{align}
for some $\theta$ between $b$ and $b^*$. 

Moreover, we define $c:= \sup_{\lambda \in \cM(L)} \sup_{b \in [S, 0]} \Delta_\lambda''(b)$. Since $\lambda \in \cM(L)$ one can derive from Equation \eqref{sec deri Delta} that $c$ depends on $S$ and $L$ only, and that $c < \infty$. Hence, for all $\lambda$ and $b \in [S, 0]$ we have
$\Delta_\lambda(b) \le \frac12 c (b- b^*)^2$. Thus, 
\begin{align*}
\sup_{\lambda \in \cM(L)} r_\lambda(n) \le \frac12 c E_\lambda[ (\hat b_n - b^*)^2].
\end{align*}
For notational convenience, let \(L_{low}:=\frac{\ln(2)}{|S|}+ \frac1L\). Note that \(L_{low}\) is exactly the lower bound from property 2 of the environment class \(\cM(L)\) and only depends on the environment class parameters \(S\) and \(L\). The following estimates are valid for all \(\lambda \in \cM(L)\) 
\begin{enumerate}
    \item \(\Lambda(S)\le |S| L\);
    \item \(\min\{m\in \mathbb{N}_0: S+m \frac{1}{2L} \ge 0\} \le\lceil 2 |S|L \rceil \le 2 |S| L+1\);
    \item \(E(\tau_0)\ge \frac1L\);
    \item \(\Var(\tau_0) \le 2 L_{low}^2\).
\end{enumerate}
The last estimate follows from the proof of Lemma \ref{Var_schranke}.
Moreover, using Lemma \ref{11Jan2026} we have
\begin{align*}
    \frac{b^*-S}{3+2 E(\tau_0)}&\ge \frac{|S|-L_{low}^{-1}\ln(2)}{3+2 L_{low}^{-1}}.
\end{align*}
Therefore, Proposition \ref{MSEb} implies that $\sup_{\lambda \in \cM(L)} r_\lambda (n) \le \frac12 c D \frac{1}{n}$, where 
\begin{align*}
D = \Bigg[ \left(S^2\frac{4L|S| + 2L_{low}^2}{(\min\{\frac1L, \frac{|S|-L_{low}^{-1}\ln(2)}{3+2L_{low}^{-1}}\}) ^2}\right)+ \left(4 L_{low}^2 + 8 |S|^3 L e^{2|S|L} +  |S|^3 4 L (2 |S| L +2)\right)\Bigg] \frac1n.
\end{align*}
Therefore, for all $\lambda \in \cM(L)$ we have 
\begin{align*}
\mathcal{R}^{\pi}_\lambda(T) &= E_\lambda\left[\sum_{n=0}^{T} r_\lambda(|\mathcal{I}_n|)\right] \le E_\lambda(\tau_0^0) + E_\lambda\left[\sum_{n=1}^{T} r_\lambda(|\mathcal{I}_n|)\right]\\
&\le L_{low}^{-1}+  E_\lambda\left[\sum_{j=1}^{|\mathcal{I}_T|} (\sigma(j+1)-\sigma (j)) r_\lambda(j)\right]\\
&\le L_{low}^{-1}+  \sum_{j=1}^{T}  E_\lambda[\sigma(j+1)-\sigma (j)] r_\lambda(j)\\
&\le L_{low}^{-1}+e^{L |S|} \sum_{j=1}^{T} r_\lambda(j)\\
&\le L_{low}^{-1}+e^{L |S|} \frac12  c \, D \ln(T+1)\\
&\le L_{low}^{-1} \frac{\ln(T+1)}{\ln(2)}+e^{L |S|} \frac12  c \, D \ln(T+1)\\
&=\left(\frac{L_{low}^{-1}}{\ln(2)}+e^{L |S|} \frac12  c \, D\right)\ln(T+1).
\end{align*}
Note that the RHS of the previous inequality does not depend on $\lambda$. Thus we have shown the theorem.

\section{Lower bound: Proof of Theorem~\ref{main_res_3}}
\label{lowerbound}
In the following we prove Theorem~\ref{main_res_3}. We need to show that the minimax regret 
\[ \inf_{\pi \in \Pi} \sup_{\lambda \in \mathcal{M}(L)} \cR^\pi_\lambda(T)\] 
grows at least logarithmically. The crux of the argument is that it is enough to derive a logarithmic lower bound for \(\mathcal{H} (L):=\{ \lambda \in \mathcal{M}(L) : \lambda \text{ constant}\}\), the subclass of constant intensity functions. Indeed, note that  
\begin{align*}
\inf_{\pi \in \Pi} \sup_{\lambda \in \mathcal{M}(L)} \cR^\pi_\lambda(T) \ge  \inf_{\pi \in \Pi} \sup_{\lambda \in \mathcal{H}(L)} \cR^\pi_\lambda(T).
\end{align*}
Therefore, a logarithmic lower bound $\cH(L)$ implies a lower bound for the whole environment class \(\mathcal{M}(L)\). 

Note that if the unknown intensity function is from the class \(\mathcal{H}(L)\), then the stopping agent observes a homogeneous Poisson processes in each round. The estimation of the intensity function is thus simply a one parameter estimation, namely the estimation of \(\lambda \in [a,b],\) where \(a:=\frac{\ln(2)}{|S|}+\frac1L\) and $b:= L$. 

In addition, we assume that we have full information in every round, i.e.\ that we can observe the homogeneous Poisson process on \([S,0]\) in every round. Since the class of policies with full information is larger than the class of policies with partial information, we have 
\begin{align}\label{esti against fi policies}
\inf_{\pi \in \Pi} \sup_{\lambda \in \mathcal{H}(L)} \cR^\pi_\lambda(T) \ge \inf_{\pi \in \Pi^{full} } \sup_{\lambda \in \mathcal{H}(L)} \cR^\pi_\lambda(T).    
\end{align}
Moreover, to simplify the following analysis we introduce the class of so-called cut-off full information policies \[\Pi^{cut}:= \left\{\pi^{cut} = (\pi_n^{cut})_{n\ge 1}| \exists \pi^{full} \in \Pi^{full} \text{ s.t. } \forall n \ge 0 : \pi_n^{cut}=\min\{\pi_n^{full}, -\frac{\ln(2)}{L}\}\right\}.\]  The idea is that for \(\lambda \in \mathcal{H}(L)\) we know that \(b^*\in (S, -\frac{\ln(2)}{L}] \) by Corollary \ref{constant_lambda_cost}. Therefore, cutting off every policy at \(-\frac{\ln(2)}{L}\) can not worsen the regret. Let \(\pi \in \Pi^{full}\) and \(\pi^{cut}\) the corresponding cut-off policy. Fix \(n \in \mathbb{N}_0.\) \\
\textbf{case 1:}  We have \(\pi_n < -\frac{\ln(2)}{L}\). Therefore
\begin{align*}
    \Delta(\pi_n)= E|\tau_{\pi_n}| - E|\tau_{b^*}| = E|\tau_{\pi_n^{cut}}| - E|\tau_{b^*}| =  \Delta(\pi_n^{cut})
\end{align*}
\textbf{case 2:}  We have \(\pi_n \in [L^*,0],\) where \( L^* :=-\frac{\ln(2)}{L}\). By Theorem \ref{constant_lambda_cost} we know the cost function for \(\lambda \in \mathcal{H}(L)\) is \(\Delta_\lambda(b) = \frac{1}{\lambda}(2 e^{\lambda b} -1) -b-\frac{\ln(2)}{\lambda}\) for \(b \in [S,0].\) We want to show that \(\Delta_\lambda(x) \ge \Delta_\lambda(L^*)\) for all \( x \in [L^*,0].\) We have \(\Delta'_\lambda(x)=2e^{\lambda x}-1.\) Recall that \(\lambda \in [\ln(2)/|S|+\frac1L, L]\) and \(x \in [L^*,0].\) It follows directly \(\lambda x \ge -\ln(2)\) and therefore \(\Delta'_\lambda(x) \ge 0\) for all \(x \in [L^*,0].\) \\
It follows \[\Delta_\lambda(\pi_n)\ge \Delta_\lambda(\pi_n^{cut}).\]
In summary, the regret rate does not get worse by substituting policies by cut-off policies and hence we get 
\begin{align*}
\inf_{\pi \in \Pi} \sup_{\lambda \in \mathcal{H}(L)} \cR^\pi_\lambda(T) \ge \inf_{\pi \in \Pi^{full} } \sup_{\lambda \in \mathcal{H}(L)} \cR^\pi_\lambda(T) =  \inf_{\pi \in \Pi^{cut} } \sup_{\lambda \in \mathcal{H}(L)} \cR^\pi_\lambda(T).    
\end{align*}
Therefore a lower bound for cut-off policies is valid for the policies from Definition \ref{defi policy} as well.\\
When estimating a constant jump intensity, observing \((N_t^j)_{t \in [S,0]}\) is equivalent, in the sense of sufficiency, to observing \( N_0^j \sim Poi(\lambda |S|)\).  Therefore, in the following we can assume that any estimator  of the parameter \(\lambda\) makes solely use of the values \(N_0^0, N_0^1, N_0^2, \dots\). Note that the latter sequence is \(i.i.d.\) and Poisson distributed with parameter \(\lambda |S|\). W.l.o.g. we choose \(|S|=1\). Otherwise just replace \(\lambda\) by \(\tilde{\lambda}:= |S| \lambda \in \left[|S|a, |S| b\right]\) in the following considerations. 

Recall from Lemma \ref{constant_lambda_cost} that for any constant $\lambda$ we have \(E_\lambda(|\tau_{b^*}|) = \frac{\ln (2)}{\lambda}\).
There is a one-to-one correspondence between the optimal threshold \(b^*\) and the given constant intensity \(\lambda\). Indeed, we have \(b^*(\lambda)= -\frac{\ln (2)}{\lambda}\) and \(\lambda(b^*)=-\frac{\ln (2)}{b^*}.\) We use this correspondence to reduce the problem of finding a minimizing cut-off policy to a problem of determining an estimator for \(\lambda\) with minimal mean square error. 

\begin{proposition}\label{propo opti gap}
There exists a constant \(c \in (0, \infty)\) such that for all \(x \in [S,0]\) and \(\lambda \in \mathcal{H}(L) \) we have  \[\Delta_\lambda (x) \ge c (x - b^*(\lambda))^2.\]
\end{proposition}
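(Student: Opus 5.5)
We plan to use the explicit formula from Corollary \ref{constant_lambda_cost}: for constant $\lambda\in[a,b]$ with $a=\ln(2)/|S|+1/L$ and $b=L$,
\[
\Delta_\lambda(x)=\frac{1}{\lambda}\bigl(2e^{\lambda x}-1\bigr)-x-\frac{\ln 2}{\lambda},\qquad x\in[S,0].
\]
We then compute its derivatives. We have $\Delta_\lambda(b^*)=0$ and $\Delta'_\lambda(x)=2e^{\lambda x}-1$, which vanishes exactly at $b^*=-\ln(2)/\lambda$. Moreover $\Delta''_\lambda(x)=2\lambda e^{\lambda x}>0$, so $\Delta_\lambda$ is strictly convex with its minimum at $b^*$. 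Taylor's theorem with Lagrange remainder gives, for some $\theta$ between $x$ and $b^*$,
\[
\Delta_\lambda(x)=\tfrac12\Delta''_\lambda(\theta)(x-b^*)^2=\lambda e^{\lambda\theta}(x-b^*)^2 .
\]

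The remaining task is a uniform lower bound on $\lambda e^{\lambda\theta}$. Since $\theta\in[S,0]$ and $\lambda\in[a,L]$, we have $\lambda\theta\ge -L|S|$. Hence
\[
\lambda e^{\lambda\theta}\ge a\,e^{-L|S|}.
\]
It therefore suffices to take $c:=\tfrac12\cdot 2a e^{-L|S|}\cdot\tfrac12\cdot 2=a e^{-L|S|}$. To be safe one may simply take $c=\tfrac12\min_{\lambda,\theta}\Delta''_\lambda(\theta)=a e^{-L|S|}$; this is positive and depends only on $S$ and $L$.

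There is no real obstacle. The only points needing care are that Taylor's theorem applies on the whole interval $[S,0]$ (the function is smooth there) and that $b^*(\lambda)\in(S,0)$ (Lemma \ref{11Jan2026}), so that $\theta$ stays in $[S,0]$ and the bound $\lambda\theta\ge -L|S|$ is valid. If one prefers to avoid the remainder form, the same bound follows by integrating $\Delta'_\lambda(x)=\int_{b^*}^x\Delta''_\lambda(s)\,ds$ twice from $b^*$, using $\Delta''_\lambda\ge 2ae^{-L|S|}$ on $[S,0]$.
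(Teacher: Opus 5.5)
Your proof is correct and follows the paper's argument: Taylor-expand the explicit $\Delta_\lambda$ from Corollary~\ref{constant_lambda_cost} around $b^*(\lambda)$, then bound $\Delta''_\lambda(\theta)=2\lambda e^{\lambda\theta}$ uniformly from below on $[a,L]\times[S,0]$; you also keep the factor $\tfrac12$ of the Lagrange remainder, which the paper's display drops. One cosmetic fix: $\tfrac12\min_{\lambda,\theta}\Delta''_\lambda(\theta)=\min\{a e^{-a|S|},\,L e^{-L|S|}\}$ is only bounded below by $a e^{-L|S|}$, so that ``$=$'' should read ``$\ge$''; your choice $c=a e^{-L|S|}$ remains valid.
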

\begin{proof}
We make a Taylor expansion of \(\Delta_\lambda(x)\) around the optimal threshold \(b^*(\lambda)\). Note that all necessary smoothness conditions are fulfilled and therefore we get
\begin{align*}
    \Delta_\lambda(x) = \Delta_\lambda(b^*(\lambda)) + \Delta'_\lambda(b^*(\lambda)) (x- b^*(\lambda)) + \Delta''_\lambda(\nu) (x- b^*(\lambda))^,
\end{align*}
where \(\nu \in [\min\{x, b^*(\lambda)\},\max\{x, b^*(\lambda)\}].\) By definition and the first order optimality criteria we know \(\Delta_\lambda(b^*(\lambda))=\Delta'_\lambda(b^*(\lambda))=0.\) Moreover, for \(\lambda \in \mathcal{H}(L)\) we know that \(\Delta_\lambda(x)=\frac{1}{\lambda}(2 e^{\lambda x} -1) -x-\frac{\ln(2)}{\lambda}.\) It follows \(\Delta''_\lambda(x)= 2 \lambda e^{\lambda x} > 0\) for all \(x\). Therefore, we have 
\begin{align*}
    \Delta_\lambda(x) \ge c (x - b^*(\lambda))^2,
\end{align*}
where \(c:=\inf_{\lambda \in [a,b]} \inf_{x\in [S,0]} \Delta''_\lambda(x) >0.\)
\end{proof}
Proposition \ref{propo opti gap} implies that for all \((\pi_n) \in \Pi^{cut}\) we have 
\begin{align}
\label{correspondence_b_l}
E_\lambda(\Delta_\lambda(\pi_n)) &\ge c E_\lambda[(\pi_n - b^*(\lambda))^2] =  c E_\lambda\left[(b^*(-\frac{\ln (2)}{\pi_n}) - b^*(\lambda))^2\right] \notag \\
&\ge \tilde{c} E_\lambda\left[(-\frac{\ln (2)}{\pi_n} - \lambda)^2\right],
\end{align}
where $\tilde c := c \min_{\lambda \in [a,b]} (b^*)'(\lambda) = c \frac{\ln(2)}{L^2}$. 

We can interpret \(-\frac{\ln (2)}{\pi_n}\) as the corresponding estimator for the parameter \(\lambda\) defined by the policy \((\pi_n) \in \Pi^{cut}\). It follows directly by \eqref{correspondence_b_l} that \[\cR^\pi_\lambda(T) = \sum_{n=1}^T E_\lambda (\Delta_\lambda (\pi_n)) \ge \tilde c \sum_{n=1}^T E_\lambda\left[(\frac{-\ln (2)}{\pi_n}-\lambda)^2\right]\]
and therefore 
\[\inf_{\text{policies }\pi} \sup_{\lambda \in \mathcal{H}(L)} \cR^\pi_\lambda(T) \ge \inf_{\pi \in \Pi^{cut} } \sup_{\lambda \in \mathcal{H}(L)} \cR^\pi_\lambda(T)\ge \inf_{\text{estimator }\hat{\lambda}} \sup_{\lambda \in [a,b]} \tilde{c} \sum_{n=1}^T E_\lambda [(\hat{\lambda}_n - \lambda)^2].\]
In summary, we have reduced the problem of deriving a lower bound for the minimax regret on the environment class \(\mathcal{M}(L)\) to a problem of deriving a lower bound for the minimax risk when estimating \(\lambda \in \mathcal{H}(L)\) using the observations \(N_0^0, N_0^1, N_0^2, \dots\). 

Next observe that the minimax risk can be bounded from below with the Bayes risk with prior density \(q\)
\begin{align}
\label{eq_beforeVT}
\inf_{\text{estimator }\hat{\lambda}} \ \sup_{\lambda \in [a,b]} E_\lambda \left[ \sum_{n=1}^T(\hat{\lambda}_n - \lambda)^2\right] &\ge \inf_{\text{estimator }\hat{\lambda}} \int_{[a,b]} E_\lambda \left[ \sum_{n=1}^T(\hat{\lambda}_n - \lambda)^2\right] q(\lambda) d\lambda \notag \\
&=\inf_{\text{estimator }\hat{\lambda}} \sum_{n=1}^T \int_{[a,b]} E_\lambda \left[ (\hat{\lambda}_n - \lambda)^2\right] q(\lambda) d\lambda.
\end{align}
Recall that the the density of the \(Beta(3,3)-\)distribution on \([0,1]\) is given by \(q_{[0,1]}(x)=30 x^2 (1-x)^2.\) It is straightforward to show that \(q_{[a,b]}(x)=\frac{1}{b-a} q_{[0,1]}(\frac{x-a}{b-a}) = \frac{30}{(b-a)^5} (x-a)^2 (b-x)^2\) is density function on $[a,b]$. In the following we choose the prior with density $q = q_{[a,b]}$.

Now, we apply the so-called van Trees inequality to the RHS of \eqref{eq_beforeVT}. This inequality can be found, for example, in Chapter 2 in \cite{gill1995applications}. For the choice of the specific prior with density $q = q_{[a,b]}$ all conditions for the application of the van Trees inequality are satisfied, especially the condition \(q(a)=q(b)=0\), and hence we get 
\begin{align*}
\inf_{\text{estimator }\hat{\lambda}} \sum_{n=1}^T \int_{[a,b]} E_\lambda \left[ (\hat{\lambda}_n - \lambda)^2\right] q(\lambda) d\lambda &\ge \sum_{n=1}^T \frac{1}{\mathbb{I}_{q_{[a,b]}} + \int_{[a,b]} \mathbb{I}_{p^{(n)}} (\lambda) q(\lambda) d\lambda},
\end{align*} 
where \(\mathbb{I}_{q_{[a,b]}} := \int \left[ \frac{\partial}{\partial \lambda} \ln(q(\lambda)) \right]^2 q(\lambda) d\lambda\) and \(\mathbb{I}_{p^{(n)}}(\lambda)\) denotes the Fisher information of the $n$-fold product of the Poisson distribution. Recall that $
\mathbb{I}_{p^{(n)}} = n\,\mathbb{I}_{p^{(1)}} = \frac{n}{\lambda}$.

To derive the value of \(\mathbb{I}_{q_{[a,b]}}\) we first calculate \(\mathbb{I}_{q_{[0,1]}}\). Note that $\mathbb{I}_{q_{[0,1]}} = \int\frac{q'(\lambda)^2}{q(\lambda)} d\lambda$ and hence
\begin{align*}
\mathbb{I}_{q_{[0,1]}}=\int_0^1 \frac{60^2 x^2 (1-x)^2 (1-2x)^2}{30 x^2(1-x)^2} dx = 120 \int_0^1 (1-2x)^2 dx= 40. 
\end{align*}
Next observe that \(q_{[a,b]}'(x) = \frac{1}{b-a} q_{[0,1]}'(\frac{x-a}{b-a})\frac{1}{b-a}\) and thus 
\begin{align*}
\mathbb{I}_{q_{[a,b]}} &= \int_a^b \frac{\frac{1}{(b-a)^4}(q_{[0,1]}'(\frac{x-a}{b-a}))^2}{\frac{1}{b-a}q_{[0,1]}(\frac{x-a}{b-a})} dx \\
&=\int_0^1 \frac{1}{(b-a)^3} \frac{q_{[0,1]}'(u)}{q_{[0,1]}(u)} (b-a) du\\
&= \frac{40}{(b-a)^2},
\end{align*}
where we substitute \(u=\frac{x-a}{b-a}.\)
To sum up, we get
\begin{align*}
\sum_{n=1}^T \frac{1}{\mathbb{I}_{q_[a,b]} + \int_{[a,b]} \mathbb{I}_{p^{(n)},n} (\lambda) q(\lambda) d\lambda} &= \sum_{n=1}^T \frac{1}{\frac{40}{(b-a)^2} + n \int_{[a,b]} \frac1\lambda q(\lambda) d\lambda} \\
&\ge \sum_{n=1}^T \frac{1}{\frac{40}{(b-a)^2} + n \frac1a \int_{[a,b]} q(\lambda) d\lambda} \\
& \ge C' \ln(T), 
\end{align*}
where  \(C' = \frac{1}{\frac{40}{(b-a)^2} + \frac1a} \). Therefore 
\begin{align*}
\tilde{c} \inf_{\text{estimator }\hat{\lambda}} \sup_{\lambda \in [a,b]} E_\lambda \left[ \sum_{n=1}^T(\hat{\lambda}_n - \lambda)^2\right] \ge \tilde{c} C' \ln (T) =C \ln(T),
\end{align*}
where \(C = \tilde c C'\).
In summary,
\[\inf_{\pi \in \Pi} \sup_{\lambda \in \mathcal{M}(L)} \cR^\pi_\lambda(T) \ge \inf_{\text{estimator }\hat{\lambda}} \sup_{\lambda \in [a,b]} \tilde{c}  E_\lambda \left[ \sum_{n=1}^T(\hat{\lambda}_n - \lambda)^2\right] \ge C \ln(T). \]
Thus Theorem~\ref{main_res_3} is proved.
\subsection*{Acknowledgements}
We thank Nabil Kazi-Tani and Michael Neumann for very fruitful discussions. Funded by the Deutsche Forschungsgemeinschaft (DFG, German Research Foundation) – Project number 547236699.

\hspace{-1cm}
\bibliographystyle{abbrv} 
\bibliography{literatur} 


\end{document}